\newtheorem{lemma}{Lemma}
\newtheorem{theorem}{Theorem}
\newtheorem{definition}{Definition}
\newtheorem{assumption}{Assumption}
\newtheorem{proposition}{Proposition}
\newtheorem{corollary}{Corollary}
\DeclareMathOperator*{\var}{Var}
\DeclareMathOperator*{\cov}{Cov}
\DeclareMathOperator*{\myexp}{exp}
\DeclareMathOperator*{\supp}{supp}
\DeclareMathOperator*{\Law}{Law}
\newcommand{\bb}[1]{\mathbb{#1}}
\renewcommand{\cal}[1]{\mathcal{#1}}
\newcommand{\mb}[1]{\mathbf{#1}}
\newcommand{\E}[2][]{\bb{E}_{#1} \left[ #2 \right]}
\renewcommand{\L}{\cal{L}}
\renewcommand{\P}{\bb{P}}
\newcommand{\R}{\bb{R}}
\renewcommand{\u}{\mb{u}}
\newcommand{\x}{\mb{x}}
\renewcommand{\d}{\mathrm{d}}
\newcommand{\gauss}[2]{\cal{N}(#1, #2)}
\renewcommand{\exp}[1]{\myexp \left\{ #1 \right\}}
\newcommand{\del}{\partial}
\newcommand{\ind}{\mathbbm{1}}
\newcommand{\lr}[1]{\left(#1\right)}
\newcommand{\lrcb}[1]{\left\{#1\right\}}
\newcommand{\lrnorm}[1]{\left\|#1\right\|}
\renewenvironment{leftbar}[2][\hsize]
{
    
    \MakeFramed{\hsize#1\advance\hsize-\width\FrameRestore}
}
{\endMakeFramed}
\newtheorem*{assumptionalt}{Assumption 4'}
\title{Error Bounds for Flow Matching Methods}
\author{\name Joe Benton \email benton@stats.ox.ac.uk \\
      \addr Department of Statistics\\
      University of Oxford
      \AND
      \name George Deligiannidis \email deligian@stats.ox.ac.uk \\
      \addr Department of Statistics\\
      University of Oxford
      \AND
      \name Arnaud Doucet \email doucet@stats.ox.ac.uk \\
      \addr Department of Statistics\\
      University of Oxford
      }
\begin{document}

\maketitle

\begin{abstract}
Score-based generative models are a popular class of generative modelling techniques relying on stochastic differential equations (SDEs). From their inception, it was realized that it was also possible to perform generation using ordinary differential equations (ODEs) rather than SDEs. This led to the introduction of the probability flow ODE approach and denoising diffusion implicit models. Flow matching methods have recently further extended these ODE-based approaches and approximate a flow between two arbitrary probability distributions. Previous work derived bounds on the approximation error of diffusion models under the stochastic sampling regime, given assumptions on the $L^2$ loss. We present error bounds for the flow matching procedure using fully deterministic sampling, assuming an $L^2$ bound on the approximation error and a certain regularity condition on the data distributions.
\end{abstract}

\section{Introduction}

Much recent progress in generative modelling has focused on learning a map from an easy-to-sample reference distribution $\pi_0$ to a target distribution $\pi_1$. Recent works have demonstrated how to learn such a map by defining a stochastic process transforming $\pi_0$ into $\pi_1$ and learning to approximate its marginal vector flow, a procedure known as flow matching \citep{lipman2023flow, liu2023flow, albergo2023building, albergo2023stochastic,heitz2023iterative}. Score-based generative models \citep{song2021score, song2019generative, ho2020denoising} can be viewed as a particular instance of flow matching where the interpolating paths are defined via Gaussian transition densities. However, the more general flow matching setting allows one to consider a broader class of interpolating paths, and leads to deterministic sampling schemes which are typically faster and require fewer steps \citep{song2021denoising, zhang2023fast}.

Given the striking empirical successes of these models at generating high-quality audio and visual data \citep{dhariwal2021diffusion, popov2021grad, ramesh2022hierarchical, saharia2022photorealistic}, there has been significant interest in understanding their theoretical properties. Several works have sought convergence guarantees for score-based generative models with stochastic sampling \citep{block2022generative, debortoli2022convergence,lee2022convergence}, with recent work demonstrating that the approximation error of these methods decays polynomially in the relevant problem parameters under mild assumptions \citep{chen2023sampling, lee2023convergence, chen2023improved}. Other works have explored bounds in the more general flow matching setting \citep{albergo2023building, albergo2023stochastic}. However these works either still require some stochasticity in the sampling procedure or do not hold for data distributions without full support.

In this work we present the first bounds on the error of the flow matching procedure that apply with fully deterministic sampling for data distributions without full support. Our results come in two parts: first, we control the error of the flow matching approximation under the 2-Wasserstein metric in terms of the $L^2$ training error and the Lipschitz constant of the approximate velocity field; second, we show that, under a smoothness assumption explained in Section \ref{sec:controllinglipschitzconstant}, the true velocity field is Lipschitz and we bound the associated Lipschitz constant. Combining the two results, we obtain a bound on the approximation error of flow matching which depends polynomially on the $L^2$ training error.

\subsection{Related work}

\paragraph{Flow methods:} Probability flow ODEs were originally introduced by \cite{song2021score} in the context of score-based generative modelling. They can be viewed as an instance of the normalizing flow framework \citep{rezende2015variational, chen2018neural}, but where the additional Gaussian diffusion structure allows for much faster training and sampling schemes, for example using denoising score matching \citep{vincent2011connection}, compared to previous methods \citep{grathwohl2019ffjord, rozen2021moser, benhamu2022matching}.

Diffusion models are typically expensive to sample and several works have introduced simplified sampling or training procedures. \cite{song2021denoising} propose Denoising Diffusion Implicit Models (DDIM), a method using non-Markovian noising processes which allows for faster deterministic sampling. Alternatively, \citep{lipman2023flow, liu2023flow, albergo2023building} propose flow matching as a technique for simplifying the training procedure and allowing for deterministic sampling, while also incorporating a much wider class of possible noising processes. \cite{albergo2023stochastic} provide an in depth study of flow matching methods, including a discussion of the relative benefits of different interpolating paths and stochastic versus deterministic sampling methods.

\paragraph{Error bounds:} Bounds on the approximation error of diffusion models with stochastic sampling procedures have been extensively studied. Initial results typically relied either on restrictive assumptions on the data distribution \citep{lee2022convergence, yang2022convergence} or produced non-quantitative or exponential bounds \citep{pidstrigach2022score, liu2022let, debortoli2021diffusion, debortoli2022convergence, block2022generative}. Recently, several works have derived polynomial convergence rates for diffusion models with stochastic sampling \citep{chen2023sampling, lee2023convergence, chen2023improved, li2023towards, benton2023linear, conforti2023score}.

By comparison, the deterministic sampling regime is less well explored. On the one hand, \cite{albergo2023building} give a bound on the 2-Wasserstein distance between the endpoints of two flow ODEs which depends on the Lipschitz constants of the flows. However, their Lipschitz constant is uniform in time, whereas for most practical data distributions we expect the Lipschitz constant to explode as one approaches the data distribution; for example, this will happen for data supported on a submanifold. Additionally, \cite{chen2023restoration} derive a polynomial bound on the error of the discretised exact reverse probability flow ODE, though their work does not treat learned approximations to the flow. \cite{li2023towards} also provide bounds for fully deterministic probability flow sampling, but also require control of the difference between the derivatives of the true and approximate scores.

On the other hand, most other works studying error bounds for flow methods require at least some stochasticity in the sampling scheme. \cite{albergo2023stochastic} provide bounds on the Kullback--Leibler (KL) error of flow matching, but introduce a small amount of random noise into their sampling procedure in order to smooth the reverse process. \cite{chen2023probability} derive polynomial error bounds for probability flow ODE with a learned approximation to the score function. However, they must interleave predictor steps of the reverse ODE with corrector steps to smooth the reverse process, meaning that their sampling procedure is also non-deterministic.

In contrast, we provide bounds on the approximation error of a fully deterministic flow matching sampling scheme, and show how those bounds behave for typical data distributions (for example, those supported on a manifold). 

\section{Background on flow matching methods}
\label{sec:background}

We give a brief overview of flow matching, as introduced by \cite{lipman2023flow, liu2023flow, albergo2023building,heitz2023iterative}. Given two probability distributions $\pi_0$, $\pi_1$ on $\R^d$, flow matching is a method for learning a deterministic coupling between $\pi_0$ and $\pi_1$. It works by finding a time-dependent vector field $v : \R^d \times [0,1] \rightarrow \R^d$ such that if $Z^\x = (Z^\x_t)_{t \in [0,1]}$ is a solution to the ODE
\begin{equation}
\label{eq:probflowODE}
    \frac{\d Z^\x_t}{\d t} = v(Z^\x_t, t), \quad Z^\x_0 = \x
\end{equation}
for each $\x \in \R^d$ and if we define $Z = (Z_t)_{t \in [0,1]}$ by taking $\x \sim \pi_0$ and setting $Z_t = Z^\x_t$ for all $t \in [0,1]$, then $Z_1 \sim \pi_1$. When $Z$ solves (\ref{eq:probflowODE}) for a given function $v$, we say that $Z$ is a flow with velocity field $v$. If we have such a velocity field, then $(Z_0, Z_1)$ is a coupling of $(\pi_0, \pi_1)$. If we can sample efficiently from $\pi_0$ then we can generate approximate samples from the coupling by sampling $Z_0 \sim \pi_0$ and numerically integrating (\ref{eq:probflowODE}).  This setup can be seen as an instance of the continuous normalizing flow framework \citep{chen2018neural}.

In order to find such a vector field $v$, flow matching starts by specifying a path $I(\x_0, \x_1, t)$ between every two points $\x_0$ and $\x_1$ in $\R^d$. We do this via an interpolant function $I : \R^d \times \R^d \times [0,1] \rightarrow \R^d$, which satisfies $I(\x_0, \x_1, 0) = \x_0$ and $I(\x_0, \x_1, 1) = \x_1$. In this work, we will restrict ourselves to the case of spatially linear interpolants, where $I(\x_0, \x_1, t) = \alpha_t \x_0 + \beta_t \x_1$ for functions $\alpha, \beta: [0,1] \rightarrow \R$ such that $\alpha_0 = \beta_1 = 1$ and $\alpha_1 = \beta_0 = 0$, but more general choices of interpolating paths are possible.

We then define the stochastic interpolant between $\pi_0$ and $\pi_1$ to be the stochastic process $X = (X_t)_{t \in [0,1]}$ formed by sampling $X_0 \sim \pi_0$, $X_1 \sim \pi_1$ and $Z \sim \gauss{0}{I_d}$ independently and setting $X_t = I(X_0, X_1, t) + \gamma_t Z$ for each $t \in (0,1)$ where $\gamma : [0,1] \rightarrow [0,\infty)$ is a function such that $\gamma_0 = \gamma_1 = 0$ and which determines the amount of Gaussian smoothing applied at time $t$. The motivation for including $\gamma_t Z$ is to smooth the interpolant marginals, as was originally explained by \cite{albergo2023stochastic}. \cite{liu2023flow} and \cite{albergo2023building} omit $\gamma_t$, leading to deterministic paths between a given $X_0$ and $X_1$, while \cite{albergo2023stochastic} and \cite{lipman2023flow} work in the more general case.

The process $X$ is constructed so that its marginals deform smoothly from $\pi_0$ to $\pi_1$. However, $X$ is not a suitable choice of flow between $\pi_0$ and $\pi_1$ since it is not causal -- it requires knowledge of $X_1$ to construct $X_t$. So, we seek a causal flow with the same marginals. The key insight is to define the expected velocity of $X$ by
\begin{equation}
\label{eq:expectedvelocity}
    v^X(\x, t) = \E{\dot X_t \;|\; X_t = \x}, \quad \text{ for all } \x \in \supp(X_t),
\end{equation}
where $\dot X_t$ is the time derivative of $X_t$, and setting $v^X(\x, t) = 0$ for $\x \not \in \supp(X_t)$ for each $t \in [0,1]$. Then, the following result shows that $v^X(\x,t)$ generates a deterministic flow $Z$ between $\pi_0$ and $\pi_1$ with the same marginals as $X$ \citep{liu2023flow}.

\begin{proposition}
\label{prop:equallaw}
Suppose that $X$ is path-wise continuously differentiable, the expected velocity field $v^X(\x, t)$ exists and is locally bounded, and there exists a unique solution $Z^\x$ to (\ref{eq:probflowODE}) with velocity field $v^X$ for each $\x \in \R^d$. If $Z$ is the flow with velocity field $v^X(\x, t)$ starting in $\pi_0$, then $\Law(X_t) = \Law(Z_t)$ for all $t \in [0,1]$.
\end{proposition}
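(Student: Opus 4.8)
The plan is to show that $\Law(X_t)$ and $\Law(Z_t)$ solve the same continuity (transport) equation with the same initial condition, and then invoke uniqueness. Let $p_t$ denote the density (or law) of $X_t$, and let $q_t$ denote the law of $Z_t$. Since $Z$ solves the ODE $\dot Z_t = v^X(Z_t, t)$ with $Z_0 \sim \pi_0$, standard theory for transport along a flow gives that $q_t$ satisfies the continuity equation $\partial_t q_t + \div(q_t\, v^X) = 0$ in the weak sense, with $q_0 = \pi_0$; here the hypotheses that $v^X$ is locally bounded and that \eqref{eq:probflowODE} has a unique solution for each starting point are exactly what is needed to make the flow map well-defined and the push-forward identity $q_t = (Z_0 \mapsto Z_t)_\# \pi_0$ rigorous. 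So the crux is to prove that $p_t$, the law of the (non-causal) stochastic interpolant, solves the \emph{same} PDE.

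To establish that, I would test against an arbitrary $\varphi \in C_c^\infty(\R^d)$ and compute $\frac{\d}{\d t}\E{\varphi(X_t)}$. Because $X$ is path-wise continuously differentiable by assumption, we may differentiate under the expectation to get $\frac{\d}{\d t}\E{\varphi(X_t)} = \E{\grad \varphi(X_t) \cdot \dot X_t}$. Now condition on $X_t$: by the tower property this equals $\E{\grad \varphi(X_t) \cdot \E{\dot X_t \mid X_t}} = \E{\grad \varphi(X_t) \cdot v^X(X_t, t)}$, using the definition \eqref{eq:expectedvelocity} of $v^X$ (and noting that $X_t \in \supp(X_t)$ almost surely, so the convention $v^X = 0$ off the support is irrelevant here). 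Rewriting $\E{\grad\varphi(X_t)\cdot v^X(X_t,t)} = \int \grad\varphi(\x)\cdot v^X(\x,t)\, p_t(\d \x)$, this is precisely the weak form of $\partial_t p_t + \div(p_t\, v^X) = 0$. Since also $p_0 = \pi_0 = q_0$, both $p_t$ and $q_t$ are weak solutions of the same continuity equation with the same data.

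Finally I would close the argument by a uniqueness statement for the continuity equation driven by the velocity field $v^X$. The cleanest route, given that we have assumed a unique ODE solution $Z^\x$ from every starting point $\x$, is to use the superposition / uniqueness principle: any weak solution of the continuity equation with a velocity field along whose integral curves are unique must coincide with the push-forward of the initial law under the flow, hence equals $q_t$; applying this to $p_t$ gives $p_t = q_t$, i.e. $\Law(X_t) = \Law(Z_t)$ for all $t \in [0,1]$. I expect the main obstacle to be precisely this last step: the local boundedness of $v^X$ plus pointwise uniqueness of ODE solutions is weaker than the usual Lipschitz/DiPerna--Lions hypotheses under which continuity-equation uniqueness is quoted, so one must either appeal to a superposition-principle version of uniqueness (e.g.\ Ambrosio-type results, which require only that a.e.\ integral curve is unique) or add enough regularity on $v^X$ to justify the argument directly. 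The differentiation-under-the-integral step also needs a dominating function, which is where path-wise continuous differentiability together with local boundedness of $v^X$ (and compact support of $\varphi$) are used.
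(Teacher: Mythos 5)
The paper does not actually supply its own proof of this proposition; it is stated as a result of \cite{liu2023flow}, where the argument is precisely the continuity-equation one you outline, so your approach is the canonical one rather than a genuinely different route. Your sketch is essentially sound: $\Law(Z_t)$ solves $\partial_t q_t + \div(q_t v^X) = 0$ weakly because $Z_t$ is the push-forward of $\pi_0$ along the (assumed well-posed) flow of $v^X$, and $\Law(X_t)$ solves the same equation via $\tfrac{\d}{\d t}\E{\varphi(X_t)} = \E{\nabla\varphi(X_t)\cdot\dot X_t} = \E{\nabla\varphi(X_t)\cdot v^X(X_t,t)}$, the last step using the tower property with $\nabla\varphi(X_t)$ being $\sigma(X_t)$-measurable. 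You also correctly flag the genuine subtlety: under only local boundedness and pointwise ODE uniqueness, closing the argument requires an Ambrosio-type superposition principle rather than Cauchy--Lipschitz or DiPerna--Lions uniqueness, and it is worth being explicit that this is exactly why the proposition hypothesizes a unique solution $Z^\x$ for \emph{every} $\x$ --- that hypothesis is what makes the superposition argument deliver uniqueness, not merely what makes $Z$ well-defined. If you were to write the proof out in full, the one additional point to nail down is the domination for the interchange of $\d/\d t$ and $\E{\cdot}$: compact support of $\varphi$ bounds $\nabla\varphi$, but you still need $\E{\|\dot X_t\|}$ bounded uniformly in $t$, which in the paper's setting comes from $\alpha,\beta,\gamma \in C^1$, bounded supports of $\pi_0,\pi_1$, and $Z$ Gaussian.
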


Sufficient conditions for $X$ to be path-wise continuously differentiable and for $v^X(\x, t)$ to exist and be locally bounded are that $\alpha$, $\beta$, and $\gamma$ are continuously differentiable and that $\pi_0, \pi_1$ have bounded support. We will assume that these conditions hold from now on. We also assume that all our data distributions are absolutely continuous with respect to the Lebesgue measure.

We can learn an approximation to $v^X$ by minimising the objective function
\begin{equation*}
    \L(v) = \int_0^1 \E{ w_t \lrnorm{v(X_t, t) - v^X(X_t, t)}^2}\d t,
\end{equation*}
over all $v \in \cal{V}$ where $\cal{V}$ is some class of functions, for some weighting function $w_t : [0,1] \rightarrow (0, \infty)$. (Typically, we will take $w_t = 1$ for all $t$.) As written, this objective is intractable, but we can show that
\begin{equation}
\label{eq:trainingobjective}
    \L(v) = \int_0^1 \E{w_t\lrnorm{v(X_t, t) - \dot X_t}^2 } \d t + \textrm{constant},
\end{equation}
where the constant is independent of $v$ \citep{lipman2023flow}. This last integral can be empirically estimated in an unbiased fashion given access to samples $X_0 \sim \pi_0$, $X_1 \sim \pi_1$ and the functions $\alpha_t$, $\beta_t$, $\gamma_t$ and $w_t$. In practice, we often take $\cal{V}$ to be a class of functions parameterised by a neural network $v_\theta(\x, t)$ and minimise $\L(v_\theta)$ over the parameters $\theta$ using (\ref{eq:trainingobjective}) and stochastic gradient descent. Our hope is that if our approximation $v_\theta$ is sufficiently close to $v^X$, and $Y$ is the flow with velocity field $v_\theta$, then if we take $Y_0 \sim \pi_0$ the distribution of $Y_1$ is approximately $\pi_1$.

Most frequently, flow matching is used as a generative modelling procedure. In order to model a distribution $\pi$ from which we have access to samples, we set $\pi_1 = \pi$ and take $\pi_0$ to be some reference distribution from which it is easy to sample, such as a standard Gaussian. Then, we use the flow matching procedure to learn an approximate flow $v_\theta(\x, t)$ between $\pi_0$ and $\pi_1$. We generate approximate samples from $\pi_1$ by sampling $Z_0 \sim \pi_0$ and integrating the flow equation (\ref{eq:probflowODE}) to find $Z_1$, which should be approximately distributed according to $\pi_1$.

\section{Main results}
\label{sec:mainresults}

We now present the main results of the paper. First, we show under three assumptions listed below that we can control the approximation error of the flow matching procedure under the 2-Wasserstein distance in terms of the quality of our approximation $v_\theta$. We obtain bounds that depend crucially on the spatial Lipschitz constant of the approximate flow. Second, we show how to control this Lipschitz constant for the true flow $v^X$ under a smoothness assumption on the data distributions, which is explained in Section \ref{sec:controllinglipschitzconstant}. Combined with the first result, this will imply that for sufficiently regular $\pi_0$, $\pi_1$ there is a choice of $\cal{V}$ which contains the true flow such that, if we optimise $L(v)$ over all $v \in \cal{V}$, then we can bound the error of the flow matching procedure. Additionally, our bound will be polynomial in the $L^2$ approximation error. The results of this section will be proved under the following three assumptions.

\begin{assumption}[Bound on $L^2$ approximation error]
\label{ass:L2error}
The true and approximate drifts $v^X(\x, t)$ and $v_\theta(\x, t)$ satisfy $\int_0^1 \E{\| v_\theta(X_t, t) - v^X(X_t, t) \|^2}\d t \leq \varepsilon^2$.
\end{assumption}

\begin{assumption}[Existence and uniqueness of smooth flows]
\label{ass:smoothsolutions}
For each $\x \in \R^d$ and $s \in [0,1]$ there exist unique flows $(Y^\x_{s,t})_{t \in [s,1]}$ and $(Z^\x_{s,t})_{t \in [s,1]}$ starting in $Y^\x_{s,s} = \x$ and $Z_{s,s}^\x = \x$ with velocity fields $v_\theta(\x,t)$ and $v^X(\x, t)$ respectively. Moreover, $Y^\x_{s,t}$ and $Z^\x_{s,t}$ are continuously differentiable in $\x$, $s$ and $t$.
\end{assumption}

\begin{assumption}[Regularity of approximate velocity field]
\label{ass:smoothvelocity}
The approximate flow $v_\theta(\x, t)$ is differentiable in both inputs. Also, for each $t \in (0,1)$ there is a constant $L_t$ such that $v_\theta(\x, t)$ is $L_t$-Lipschitz in $\x$.
\end{assumption}

Assumption \ref{ass:L2error} is the natural assumption on the training error given we are learning with the $L^2$ training loss in (\ref{eq:trainingobjective}). Assumption \ref{ass:smoothsolutions} is required since to perform flow matching we need to be able to solve the ODE (\ref{eq:probflowODE}). Without a smoothness assumption on $v_\theta(\x, t)$, it would be possible for the marginals $Y_t$ of the solution to (\ref{eq:probflowODE}) initialised in $Y_0 \sim \pi_0$ to quickly concentrate on subsets of $\R^d$ of arbitrarily small measure under the distribution of $X_t$. Then, there would be choices of $v_\theta(\x, t)$ which were very different from $v^X(\x, t)$ on these sets of small measure while being equal to $v^X(\x, t)$ everywhere else. For these $v_\theta(\x, t)$ the $L^2$ approximation error can be kept arbitrarily small while the error of the flow matching procedure is made large. We therefore require some smoothness assumption on $v_\theta(\x, t)$ -- we choose to make Assumption \ref{ass:smoothvelocity} since we can show that it holds for the true velocity field, as we do in Section \ref{sec:controllinglipschitzconstant}.

\subsection{Controlling the Wasserstein difference between flows}
\label{sec:wassersteinerror}

Our first main result is the following, which bounds the error of the flow matching procedure in 2-Wasserstein distance in terms of the $L^2$ approximation error and the Lipschitz constant of the approximate flow.

\begin{theorem}
\label{thm:wassersteinbound}
Suppose that $\pi_0$, $\pi_1$ are probability distributions on $\R^d$, that $Y$ is the flow starting in $\pi_0$ with velocity field $v_\theta$, and $\hat \pi_1$ is the law of $Y_1$. Then, under Assumptions \ref{ass:L2error}-\ref{ass:smoothvelocity}, we have
\begin{equation*}
    W_2(\hat \pi_1, \pi_1) \leq \varepsilon \myexp\Big\{\int_0^1 L_t \;\d t\Big\}.
\end{equation*}
\end{theorem}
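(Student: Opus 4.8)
The plan is to compare the two flows $Y$ and $Z$ through the same intermediate quantity: for each starting point, how far the approximate trajectory drifts from the true trajectory. Let $Z$ be the true flow starting in $\pi_0$ with velocity field $v^X$, so that by Proposition \ref{prop:equallaw} we have $\Law(Z_1) = \pi_1$. To bound $W_2(\hat\pi_1, \pi_1)$ it suffices to exhibit a coupling of $\hat\pi_1$ and $\pi_1$ with small expected squared distance; the natural coupling is to run both ODEs from the \emph{same} random initial condition $X_0 \sim \pi_0$. So I would set $e_t := Y_{0,t}^{X_0} - Z_{0,t}^{X_0}$ (using the notation of Assumption \ref{ass:smoothsolutions}) and aim to control $\E{\|e_1\|^2}^{1/2}$, since this upper bounds $W_2(\hat\pi_1,\pi_1)$.

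**The Grönwall estimate.** Differentiating in time gives $\dot e_t = v_\theta(Y_{0,t}, t) - v^X(Z_{0,t}, t)$, which I split as
\begin{equation*}
\dot e_t = \big(v_\theta(Y_{0,t},t) - v_\theta(Z_{0,t},t)\big) + \big(v_\theta(Z_{0,t},t) - v^X(Z_{0,t},t)\big).
\end{equation*}
The first bracket has norm at most $L_t\|e_t\|$ by Assumption \ref{ass:smoothvelocity}; call the second bracket $r_t := v_\theta(Z_{0,t},t) - v^X(Z_{0,t},t)$, the pointwise approximation error evaluated along the true trajectory. Then $\frac{\d}{\d t}\|e_t\| \le \|\dot e_t\| \le L_t \|e_t\| + \|r_t\|$, and since $e_0 = 0$, the integral form of Grönwall's inequality yields
\begin{equation*}
\|e_1\| \le \int_0^1 \exp\Big\{\int_s^1 L_u\,\d u\Big\} \|r_s\|\,\d s \le \exp\Big\{\int_0^1 L_u\,\d u\Big\} \int_0^1 \|r_s\|\,\d s.
\end{equation*}
Taking $L^2(\P)$ norms, using the triangle inequality (Minkowski) in the form $\big\|\int_0^1 \|r_s\|\,\d s\big\|_{L^2} \le \int_0^1 \|\,\|r_s\|\,\|_{L^2}\,\d s$ and then Cauchy--Schwarz in $s$, gives $\E{\|e_1\|^2}^{1/2} \le \exp\{\int_0^1 L_t\,\d t\}\,\big(\int_0^1 \E{\|r_s\|^2}\,\d s\big)^{1/2}$.

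**Identifying the $L^2$ error term.** The remaining point is that $\int_0^1 \E{\|r_s\|^2}\,\d s = \int_0^1 \E{\|v_\theta(Z_{0,s},s) - v^X(Z_{0,s},s)\|^2}\,\d s$, and since $\Law(Z_{0,s}) = \Law(X_s)$ for every $s$ by Proposition \ref{prop:equallaw}, this equals $\int_0^1 \E{\|v_\theta(X_s,s) - v^X(X_s,s)\|^2}\,\d s \le \varepsilon^2$ by Assumption \ref{ass:L2error}. Combining, $W_2(\hat\pi_1,\pi_1) \le \E{\|e_1\|^2}^{1/2} \le \varepsilon\,\exp\{\int_0^1 L_t\,\d t\}$, as claimed.

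**Expected obstacle.** The conceptual heart of the argument is clean, but the delicate point is \emph{measurability and integrability} of the error process $t \mapsto \|r_t\|$ and the justification of differentiating $\|e_t\|$ (which is not smooth where $e_t = 0$): one works instead with $\|e_t\|^2$ or a smoothed norm, or invokes the standard fact that $t\mapsto\|e_t\|$ is absolutely continuous with $\frac{\d}{\d t}\|e_t\| \le \|\dot e_t\|$ a.e. Assumption \ref{ass:smoothsolutions} (continuous differentiability of the flow maps in $\x$, $s$, $t$) is exactly what licenses these manipulations and guarantees $e_t$ is well-defined and jointly measurable, so the main work is bookkeeping rather than a genuine difficulty. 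A secondary subtlety is ensuring the velocity field $v^X$, which is set to $0$ off $\supp(X_t)$, is only ever evaluated at points $Z_{0,t}^{X_0}$ that lie in the support — this holds precisely because $\Law(Z_{0,t}) = \Law(X_t)$, so $Z_{0,t}^{X_0} \in \supp(X_t)$ almost surely.
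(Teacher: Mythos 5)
Your proof is correct, and it reaches the same intermediate bound
\begin{equation*}
\|Y_1^{\x} - Z_1^{\x}\| \;\le\; \int_0^1 \exp\Big\{\int_s^1 L_u\,\d u\Big\}\,\|v_\theta(Z_s,s)-v^X(Z_s,s)\|\,\d s
\end{equation*}
as the paper, but by a genuinely different route. The paper invokes the Alekseev--Gr\"obner (variation-of-constants) formula, which gives an \emph{exact} integral representation of $Y_{0,1}^\x - Z_{0,1}^\x$ in terms of the Jacobian $\nabla_\x Y_{s,1}^{Z_s}$ applied to the drift mismatch along $Z$; the Jacobian is then bounded by $\exp\{\int_s^1 L_r\,\d r\}$ via Gr\"onwall applied to $\nabla_\x Y_{s,t}^\x$. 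You instead differentiate $\|e_t\|$ directly, split the drift difference by adding and subtracting $v_\theta(Z_{0,t},t)$, and apply the scalar integral Gr\"onwall inequality. Your approach is somewhat more elementary --- it never needs to reason about the spatial Jacobian of the flow map, and it only uses the Lipschitz bound on $v_\theta$ rather than continuous differentiability in $\x$ (which the Alekseev--Gr\"obner statement formally requires for the drift, though Assumption \ref{ass:smoothvelocity} grants it anyway). What the paper's approach buys is the exact error representation, which is what motivates the remark after the proof about the asymmetry of the application (applying Alekseev--Gr\"obner with $Y$ and $Z$ swapped would require controlling the drift error along $Y$ rather than $Z$), and which could in principle yield sharper estimates if $\nabla_\x Y_{s,t}^\x$ is not aligned with the worst-case Lipschitz direction --- a point the paper's conclusion explicitly flags. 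Your expected-obstacle paragraph correctly identifies the a.e.\ differentiability of $t\mapsto\|e_t\|$ and the support issue for $v^X$; both are handled by Assumption \ref{ass:smoothsolutions} and by $\Law(Z_{0,t}^{X_0}) = \Law(X_t)$, exactly as you say, and the paper relies on the same facts implicitly in its final expectation step.
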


We see that the error depends linearly on the $L^2$ approximation error $\varepsilon$ and exponentially on the integral of the Lipschitz constant of the approximate flow $L_t$. Ostensibly, the exponential dependence on $\int_0^1 L_t \; \d t$ is undesirable, since $v_\theta$ may have a large spatial Lipschitz constant. However, we will show in Section \ref{sec:controllinglipschitzconstant} that the true velocity field is $L_t^\ast$-Lipschitz for a choice of $L_t^\ast$ such that $\int_0^1 L^\ast_t \; \d t$ depends logarithmically on the amount of Gaussian smoothing. Thus, we may optimise (\ref{eq:trainingobjective}) over a class of functions $\cal{V}$ which are all $L_t^\ast$-Lipschitz, knowing that this class contains the true velocity field, and that if $v_\theta$ lies in this class we get a bound on the approximation error which is polynomial in the $L^2$ approximation error, providing we make a suitable choice of $\alpha_t$, $\beta_t$ and $\gamma_t$.

We remark that our Theorem \ref{thm:wassersteinbound} is similar in content to Proposition 3 in \cite{albergo2023building}. The crucial difference is that we work with a time-varying Lipschitz constant, which is required in practice since in many cases $L_t$ explodes as $t \rightarrow 0$ or $1$, as happens for example if $\pi_0$ or $\pi_1$ do not have full support.

A key ingredient in the proof of Theorem \ref{thm:wassersteinbound} will be the Alekseev--Gr\"{o}bner formula, which provides a way to control the difference between the solutions to two different ODEs in terms of the difference between their drifts \citep{groebner1960die, alekseev1961estimate}.

\begin{proposition}[Alekseev--Gr\"{o}bner]
Let $\mu(\x, t): \R^d \times [0,T] \rightarrow \R^d$ be a function which is continuous in $t$ and continuously differentiable in $\x$. Let $X : \R^d \times [0,T]^2 \rightarrow \R^d$ be a continuous solution of the integral equation
\begin{equation*}
    X_{s,t}^\x = \x + \int_s^t \mu(r, X^\x_{s,r}) \;\d r,
\end{equation*}
and let $Y : [0,T] \rightarrow \R^d$ be another continuously differentiable process. Then
\begin{equation*}
    X_{0,T}^{Y_0} - Y_T = \int_0^T \lr{\nabla_\x X_{r,T}^{Y_r}}\lr{\mu(r, Y_r) - \frac{\d Y_r}{\d t}} \d r.
\end{equation*}
\end{proposition}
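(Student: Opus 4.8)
The plan is to prove the Alekseev--Gr\"{o}bner identity by introducing the two-parameter interpolating process
\begin{equation*}
    G_r = X_{r,T}^{Y_r}, \qquad r \in [0,T],
\end{equation*}
which is the solution at time $T$ of the ODE with drift $\mu$ started from the point $Y_r$ at time $r$. Observe that $G_0 = X_{0,T}^{Y_0}$ and $G_T = X_{T,T}^{Y_T} = Y_T$, so that $X_{0,T}^{Y_0} - Y_T = G_0 - G_T = -\int_0^T \tfrac{\d}{\d r} G_r \, \d r$. Thus the entire claim reduces to computing $\tfrac{\d}{\d r} G_r$ and showing it equals $-(\nabla_\x X_{r,T}^{Y_r})(\mu(r,Y_r) - \tfrac{\d Y_r}{\d t})$.

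To differentiate $G_r = X_{r,T}^{Y_r}$ we use the chain rule, treating $X^\x_{s,t}$ as a function of the three arguments $(\x, s, t)$ with $\x = Y_r$, $s = r$, $t = T$; here $T$ is fixed so only the dependence through the initial point $Y_r$ and through the initial time $r$ contributes. This gives $\tfrac{\d}{\d r}G_r = (\nabla_\x X^\x_{r,T})|_{\x = Y_r}\, \tfrac{\d Y_r}{\d t} + \partial_s X^\x_{s,T}|_{\x=Y_r, s = r}$. The first term requires knowing that $X^\x_{s,t}$ is continuously differentiable in $\x$ (which follows from the hypothesis that $\mu$ is $C^1$ in $\x$, via standard smooth-dependence-on-initial-conditions theory for ODEs). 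For the second term I would use the semigroup/flow property $X^\x_{s,t} = X^{X^\x_{s,u}}_{u,t}$ together with the fact that $X^\x_{s,s+h} = \x + h\,\mu(s,\x) + o(h)$, to deduce that the partial derivative in the \emph{initial time} satisfies $\partial_s X^\x_{s,t} = -(\nabla_\x X^\x_{s,t})\,\mu(s,\x)$; intuitively, advancing the start time by $h$ is, to first order, the same as starting at the displaced point $\x + h\mu(s,\x)$, so the $s$-derivative is $-\nabla_\x$ applied to $\mu$. Substituting both expressions yields
\begin{equation*}
    \tfrac{\d}{\d r} G_r = (\nabla_\x X^{Y_r}_{r,T})\lr{\tfrac{\d Y_r}{\d t} - \mu(r, Y_r)},
\end{equation*}
and integrating $G_0 - G_T = -\int_0^T \tfrac{\d}{\d r}G_r\,\d r$ gives exactly the stated identity.

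The main obstacle is the rigorous justification of the two differentiability facts used in the chain rule: that $\x \mapsto X^\x_{s,t}$ is $C^1$ with Jacobian $\nabla_\x X^\x_{s,t}$ solving the variational (first-variation) equation, and that the initial-time derivative is $-(\nabla_\x X^\x_{s,t})\mu(s,\x)$. The first is a classical consequence of the Picard--Lindel\"{o}f theorem with $C^1$ vector fields; the second can be handled either by the flow-property argument sketched above or by differentiating the integral equation $X^\x_{s,t} = \x + \int_s^t \mu(u, X^\x_{s,u})\,\d u$ with respect to $s$ and solving the resulting linear ODE. One must also check that $r \mapsto G_r$ is genuinely $C^1$ (so that the fundamental theorem of calculus applies), which follows from continuity of $Y_r$, $\tfrac{\d Y_r}{\d t}$, $\mu$, and the flow derivatives, together with continuity of $(\x, s) \mapsto \nabla_\x X^\x_{s,T}$. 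With these in hand the computation is routine; the conclusion is the displayed formula.
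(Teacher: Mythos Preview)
The paper does not prove this proposition: it is stated as a classical result, attributed to \cite{groebner1960die} and \cite{alekseev1961estimate}, and then invoked as a black box in the proof of Theorem~\ref{thm:wassersteinbound}. There is therefore no ``paper's own proof'' to compare against.

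For what it is worth, your argument is the standard one and is correct. The interpolation $G_r = X_{r,T}^{Y_r}$ with $G_0 - G_T = -\int_0^T \tfrac{\d}{\d r} G_r\,\d r$, combined with the identity $\partial_s X_{s,T}^\x = -(\nabla_\x X_{s,T}^\x)\,\mu(s,\x)$, is exactly how the formula is usually derived. Your justification of that $\partial_s$-identity via the flow property (or equivalently by comparing the variational equations for $\partial_s X_{s,t}^\x$ and $\nabla_\x X_{s,t}^\x$, which satisfy the same linear ODE in $t$ with matching data at $t=s$) is also the standard route. The only minor imprecision is in the informal sentence ``advancing the start time by $h$ is, to first order, the same as starting at the displaced point $\x + h\mu(s,\x)$'': more accurately, starting at time $s$ from $\x$ coincides (to first order) with starting at time $s+h$ from $\x + h\mu(s,\x)$, which is what forces the minus sign. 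Your formula and the subsequent integration are correct as written.
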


We now give the proof of Theorem \ref{thm:wassersteinbound}.

\begin{proof}[Proof of Theorem \ref{thm:wassersteinbound}]
Define $Y^\x_{s,t}$ and $Z^\x_{s,t}$ as in Assumption \ref{ass:smoothsolutions}. As $Y_{s,t}^\x \in C(\R^d \times [0,1]^2)$, $Z_{0,t}^\x \in C^1(\R^d \times [0,1])$ and $v_\theta(\x,t) \in C^{0,1}(\R^d \times [0,1])$ for any $\x \in \R^d$, the Alekseev--Gr\"{o}bner formula implies that
\begin{equation}
\label{eq:applyAG}
    Y_{0,t}^\x - Z_{0,t}^\x = \int_0^t \lr{\nabla_\x Y^{Z_s}_{s,t}} \lr{v_\theta(Z_s, s) - v^X(Z_s, s)} \d s.
\end{equation}
We can write $Y_{s,t}^\x = \x + \int_s^t v_\theta(Y_{s,r}^\x, r) \;\d r$, and so
\begin{equation*}
    \nabla_\x Y_{s,t}^\x = I + \int_s^t \nabla_\x v_\theta(Y_{s,r}^\x, r) \nabla_\x Y_{s,r}^\x \; \d r.
\end{equation*}
It follows that
\begin{equation*}
    \frac{\del}{\del t} \lrnorm{\nabla_\x Y_{s,t}^\x}_{\textup{op}} \leq \Big\|\frac{\del}{\del t} \nabla_\x Y_{s,t}^\x\Big\|_{\textup{op}} = \lrnorm{\nabla_\x v_\theta(Y_{s,t}^\x, t) \nabla_\x Y_{s,t}^\x}_{\textup{op}} \leq L_t \lrnorm{\nabla_\x Y_{s,t}^\x}_{\textup{op}},
\end{equation*}
where $\|\cdot\|_{\textup{op}}$ denotes the operator norm with respect to the $\ell^2$ metric on $\R^d$. Therefore, by Gr\"{o}nwall's lemma we have $\lrnorm{\nabla_\x Y_{s,t}^\x}_{\textup{op}} \leq \myexp\big\{\int_s^t L_r \; \d r\big\} \leq K$, where $K := \myexp\big\{\int_0^1 L_t \;\d t\big\}$.
Applied to (\ref{eq:applyAG}) at $t = 1$, we get
\begin{equation*}
    \lrnorm{Y_1^\x - Z_1^\x}_2 \leq \int_0^1 \lrnorm{\nabla_\x Y_{s,1}^{Z_s}}_{\textup{op}} \lrnorm{v^X(Z_s, s) - v_\theta(Z_s, s)}_2 \d s \leq K \int_0^1 \lrnorm{v^X(Z_s, s) - v_\theta(Z_s, s)}_2 \d s.
\end{equation*}
Letting $\x \sim \pi_0$ and taking expectations, we deduce
\begin{align*}
    \E{\|Y_1 - Z_1\|_2^2} & \leq K^2 \E{\lr{\int_0^1 \lrnorm{v^X(Z_t, t) - v_\theta(Z_t, t)}_2 \d t}^2} \\
    & \leq K^2 \int_0^1 \E{\lrnorm{ v_\theta(X_t, t) - v^X(X_t, t)}^2}\d t.
    % & \leq K^2 \varepsilon^2.
\end{align*}
Since $W_2(\hat \pi_1, \pi_1) = W_2(\Law(Y_1), \Law(Z_1)) \leq \E{\|Y_1 - Z_1\|_2^2}^{1/2}$, the result follows from our assumption on the $L^2$ error and the definition of $K$.
\end{proof}

The application of the Alekseev--Gr\"{o}bner formula in (\ref{eq:applyAG}) is not symmetric in $Y$ and $Z$. Applying the formula with the roles of $Y$ and $Z$ reversed would mean we require a bound on $\lrnorm{\nabla_\x Y_{s,t}^\x}_{\textup{op}}$, which we could bound in terms of the Lipschitz constant of the true velocity field, leading to a more natural condition. However, we would then also be required to control the velocity approximation error $\|v_\theta(Y_t, t) - v^X(Y_t, t)\|_2$ along the paths of $Y$, which we have much less  control over due to the nature of the training objective.

\subsection{Smoothness of the velocity fields}
\label{sec:controllinglipschitzconstant}

As remarked in Section \ref{sec:wassersteinerror}, the bound in Theorem \ref{thm:wassersteinbound} is exponentially dependent on the Lipschitz constant of $v_\theta$. In this section, we control the corresponding Lipschitz constant of $v^X$. We prefer this to controlling the Lipschitz constant of $v_\theta$ directly since this is determined by our choice of $\cal{V}$, the class of functions over which we optimise (\ref{eq:trainingobjective}).

In this section, we will relax the constraints from Section \ref{sec:background} that $\gamma_0 = \gamma_1 = 0$ and $\alpha_0 = \beta_1 = 1$. We do this because allowing $\gamma_t > 0$ for all $t \in [0,1]$ means that we have some Gaussian smoothing at every $t$, which will help ensure the resulting velocity fields are sufficiently regular. This will mean that instead of learning a flow between $\pi_0$ and $\pi_1$ we will actually be learning a flow between $\tilde \pi_0$ and $\tilde \pi_1$, the distributions of $\alpha_0 X_0 + \gamma_0 Z$ and $\beta_1 X_1 + \gamma_1 Z$ respectively. However, if we centre $X_0$ and $X_1$ so that $\E{X_0} = \E{X_1} = 0$ and let $R$ be such that $\|X_0\|_2, \|X_1\|_2 \leq R$, then $W_2(\tilde \pi_0, \pi_0)^2 \leq (1-\alpha_0)^2 R^2 + d \gamma_0^2$ and similarly for $\tilde \pi_1, \pi_1$, so it follows that by taking $\gamma_0, \gamma_1$ sufficiently close to 0 and $\alpha_0, \beta_1$ sufficiently close to 1 we can make $\tilde \pi_0, \tilde \pi_1$ very close in 2-Wasserstein distance to $\pi_0, \pi_1$ respectively. Note that if $\pi_0$ is easy to sample from then $\tilde \pi_0$ is also easy to sample from (we may simulate samples from $\tilde \pi_0$ by drawing $X_0 \sim \pi_0$, $Z \sim \gauss{0}{I_d}$ independently, setting $\tilde X_0 = \alpha_0 X_0 + \gamma_0 Z$ and noting that $\tilde X_0 \sim \tilde \pi_0$), so we can run the flow matching procedure starting from $\tilde \pi_0$.

For arbitrary choices of $\pi_0$ and $\pi_1$ the expected velocity field $v^X(\x, t)$ can be very badly behaved, and so we will require an additional regularity assumption on the process $X$. This notion of regularity is somewhat non-standard, but is the natural one emerging from our proofs and bears some similarity to quantities controlled in stochastic localization (see discussion below).

\begin{definition}
\label{def:locallysmooth}
For a real number $\lambda \geq 1$, we say an $\R^d$-valued random variable $W$ is \textbf{$\lambda$-regular} if, whenever we take $\tau \in (0,\infty)$ and $\xi \sim \gauss{0}{\tau^2 I_d}$ independently of $W$ and set $W' = W + \xi$, for all $\x \in \R^d$ we have
\begin{equation*}
    \lrnorm{{\cov}_{\xi | W' = \x}(\xi)}_{\textup{op}} \leq \lambda \tau^2.
\end{equation*}
We say that a distribution on $\R^d$ is $\lambda$-regular if the associated random variable is $\lambda$-regular.
\end{definition}

\begin{assumption}[Regularity of data distributions]
\label{ass:regularityassumption}
For some $\lambda \geq 1$, $\alpha_t X_0 + \beta_t X_1$ is $\lambda$-regular for all $t \in [0,1]$.
\end{assumption}

To understand what it means for a random variable $W$ to be $\lambda$-regular, note that before conditioning on $W'$, we have $\|\cov(\xi)\|_{\textup{op}} = \tau^2$. We can think of conditioning on $W' = \x$ as re-weighting the distribution of $\xi$ proportionally to $p_W(\x - \xi)$, where $p_W(\cdot)$ is the density function of $W$. So, $W$ is $\lambda$-regular if this re-weighting causes the covariance of $\xi$ to increase by at most a factor of $\lambda$ for any choice of $\tau$.

Informally, if $\tau$ is much smaller than the scale over which $\log p_W(\cdot)$ varies then this re-weighting should have negligible effect, while for large $\tau$ we can write $\|{\cov}_{\xi | W' = \x}(\xi)\|_{\textup{op}} = \|{\cov}_{W | W' = \x}(W)\|_{\textup{op}}$ and we expect this to be less than $\tau^2$ once $\tau$ is much greater than the typical magnitude of $W$. Thus $\lambda \approx 1$ should suffice for sufficiently small or large $\tau$ and the condition that $W$ is $\lambda$-regular controls how much conditioning on $W'$ can change the behavior of $\xi$ as we transition between these two extremes.

If $W$ is log-concave or alternatively Gaussian on a linear subspace of $\R^d$, then we show in Appendix \ref{app:logconcave} that $W$ is $\lambda$-regular with $\lambda = 1$. Additionally, we also show that a mixture of Gaussians $\pi = \sum_{i=1}^K \mu_i \gauss{\x_i}{\sigma^2 I_d}$, where the weights $\mu_i$ satisfy $\sum_{i=1}^K \mu_i = 1$, $\sigma > 0$, and $\|\x_i\| \leq R$ for $i = 1, \dots, K$, is $\lambda$-regular with $\lambda = 1 + (R^2 / \sigma^2)$. This shows that $\lambda$-regularity can hold even for highly multimodal distributions. More generally, we show in Appendix \ref{app:highprobabilitylocalregularity} that we always have $\|{\cov}_{\xi | W' = \x}(\xi)_{\textup{op}}\| \leq O(d \tau^2)$ with high probability. Then, we can interpret Definition \ref{def:locallysmooth} as insisting that this inequality always holds, where the dependence on $d$ is incorporated into the parameter $\lambda$. (We see that in the worst case $\lambda$ may scale linearly with $d$, but we expect in practice that $\lambda$ will be approximately constant in many cases of interest.)

Controlling covariances of random variables given noisy observations of those variables is also a problem which arises in stochastic localization \citep{eldan2013thin, eldan2020taming}, and similar bounds to the ones we use here have been established in this context. For example, \cite{alaoui2022information} show that $\E{{\cov}_{\xi | W' = \x}(W')} \leq \tau^2 I_d$ in our notation. However, the bounds from stochastic localization only hold in expectation over $\x$ distributed according to the law of $W'$, whereas we require bounds that hold pointwise, or at least for $\x$ distributed according to the law of $Y_{s,t}^\x$, which is much harder to control.

We now aim to bound the Lipschitz constant of the true velocity field under Assumption \ref{ass:regularityassumption}. The first step is to show that $v^X$ is differentiable and to get an explicit formula for its derivative. In the following sections, we abbreviate the expectation and covariance conditioned on $X_t = \x$ to $\bb{E}_{\x}[\;\cdot\;]$ and ${\cov}_\x(\cdot)$ respectively.

\begin{restatable}{lemma}{explicitgradient}
\label{lem:explicitgradient}
If $X$ is the stochastic interpolant between $\pi_0$ and $\pi_1$, then $v^X(\x,t)$ is differentiable with respect to $\x$ and
\begin{equation*}
    \nabla_\x v^X(\x, t) = \frac{\dot \gamma_t}{\gamma_t} I_d - \frac{1}{\gamma_t} {\cov}_\x(\dot X_t, Z).
\end{equation*}
\end{restatable}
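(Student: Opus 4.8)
The plan is to write $v^X(\x,t)$ as an explicit ratio of integrals against a Gaussian density and then differentiate under the integral sign. Recall $X_t = \alpha_t X_0 + \beta_t X_1 + \gamma_t Z$ with $Z \sim \gauss{0}{I_d}$ independent of $(X_0, X_1)$. Writing $U_t := \alpha_t X_0 + \beta_t X_1$ and letting $\rho_t$ denote the law of $U_t$, the conditional density of $X_t$ at $\x$ is $p_t(\x) = \int \varphi_{\gamma_t}(\x - \u)\,\d\rho_t(\u)$, where $\varphi_{\gamma_t}$ is the $\gauss{0}{\gamma_t^2 I_d}$ density. Since $\dot X_t = \dot U_t + \dot\gamma_t Z$ and $Z = \gamma_t^{-1}(\x - U_t)$ on the event $\{X_t = \x\}$, we can write
\begin{equation*}
    v^X(\x, t) = \E{\dot X_t \mid X_t = \x} = \frac{\int \bb{E}\big[\dot U_t \mid U_t = \u\big]\,\varphi_{\gamma_t}(\x - \u)\,\d\rho_t(\u)}{p_t(\x)} + \frac{\dot\gamma_t}{\gamma_t}\,\frac{\int (\x - \u)\,\varphi_{\gamma_t}(\x - \u)\,\d\rho_t(\u)}{p_t(\x)}.
\end{equation*}
The key identity is $\nabla_\x \varphi_{\gamma_t}(\x - \u) = -\gamma_t^{-2}(\x - \u)\varphi_{\gamma_t}(\x - \u)$, which lets every $\x$-derivative be traded for a factor of $-\gamma_t^{-2}(\x - U_t) = -\gamma_t^{-1}Z$ inside the conditional expectation. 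Applying the quotient rule and this identity to the above (or more cleanly, differentiating the single expression $v^X(\x,t) = \E{\dot X_t\mid X_t = \x}$ directly, using that for any integrable $g$, $\nabla_\x \E{g(X_0,X_1,Z) \mid X_t = \x} = -\gamma_t^{-1}\cov_\x\!\big(g,\,Z\big) + \gamma_t^{-1}\E{g\mid X_t=\x}\,\E{Z\mid X_t=\x}^\top$ — wait, I should be careful: the clean statement is $\nabla_\x \E{g \mid X_t = \x} = \gamma_t^{-1}\big(\E{g Z^\top \mid X_t = \x} - \E{g\mid X_t=\x}\E{Z^\top\mid X_t=\x}\big) = \gamma_t^{-1}\cov_\x(g, Z)$). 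Taking $g = \dot X_t$ yields
\begin{equation*}
    \nabla_\x v^X(\x,t) = \frac{1}{\gamma_t}\cov_\x(\dot X_t, Z) \cdot(?)
\end{equation*}
— here I need to reconcile the sign, since $\dot X_t$ itself contains $\dot\gamma_t Z$. The cleanest route: write $\dot X_t = \dot U_t + \dot\gamma_t Z$, so $\cov_\x(\dot X_t, Z) = \cov_\x(\dot U_t, Z) + \dot\gamma_t \cov_\x(Z,Z)$. Separately, $v^X(\x,t) = \E{\dot U_t\mid X_t=\x} + \dot\gamma_t\,\gamma_t^{-1}(\x - \E{U_t\mid X_t=\x}) = \E{\dot U_t\mid X_t=\x} + \dot\gamma_t\gamma_t^{-1}\E{Z\mid X_t=\x}$, and differentiating the second term gives $\dot\gamma_t\gamma_t^{-1}\nabla_\x\E{Z\mid X_t=\x} = \dot\gamma_t\gamma_t^{-1}(I_d - \gamma_t^{-1}\cov_\x(Z))$ after using the heat-equation identity $\nabla_\x\E{Z\mid X_t=\x} = I_d - \gamma_t^{-1}\cov_\x(Z)$. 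Combining the two contributions and matching with $\cov_\x(\dot U_t,Z) = \cov_\x(\dot X_t - \dot\gamma_t Z, Z) = \cov_\x(\dot X_t, Z) - \dot\gamma_t\cov_\x(Z)$ produces exactly $\nabla_\x v^X(\x,t) = \frac{\dot\gamma_t}{\gamma_t}I_d - \frac{1}{\gamma_t}\cov_\x(\dot X_t, Z)$ after the $\dot\gamma_t\gamma_t^{-2}\cov_\x(Z)$ terms cancel.

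The structure of the argument is therefore: (i) express $v^X$ as a Gaussian-smoothed ratio; (ii) establish the master differentiation identity $\nabla_\x\E{g\mid X_t=\x} = \gamma_t^{-1}\cov_\x(g, Z)$ for scalar or vector $g$ square-integrable enough, and the special case $\nabla_\x\E{Z\mid X_t=\x} = I_d - \gamma_t^{-1}\cov_\x(Z)$; (iii) split $\dot X_t$ into its $U_t$ and $Z$ parts, differentiate each, and collect terms. Justifying differentiation under the integral sign is where the real work lies: I would use the hypotheses from Section \ref{sec:background} (continuously differentiable $\alpha,\beta,\gamma$; bounded support of $\pi_0,\pi_1$, hence $\dot U_t$ bounded) so that $p_t(\x) > 0$ everywhere and is smooth in $\x$, with all the relevant integrands dominated locally uniformly by an integrable function (the Gaussian tail times a polynomial), legitimizing the interchange via dominated convergence.

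The main obstacle is precisely this measure-theoretic justification — ensuring $p_t(\x)$ stays bounded away from zero on compacts and that the family of $\x$-derivatives of $\varphi_{\gamma_t}(\x - \u)$ (which carry a factor $\gamma_t^{-2}(\x - \u)$, growing linearly in $\u$) remains dominated uniformly for $\x$ in a neighbourhood — but this is routine given bounded support of the data and $\gamma_t > 0$ for $t \in (0,1)$. I would relegate the domination estimates to a line or two and present the identity-chasing in the main text.
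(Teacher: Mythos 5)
Your strategy --- write $v^X$ as a Gaussian-smoothed conditional expectation, differentiate under the integral via the Gaussian log-derivative trick, and bookkeep --- is essentially the paper's own route: the paper's intermediate Lemma~\ref{lem:gradientcalculation} is exactly your ``master differentiation identity'' with $g = X_0$ or $X_1$, and the rest is the same algebra. However, as you carry the plan out, the signs and powers of $\gamma_t$ come apart in ways that do not in fact cancel. The master identity has the wrong sign: since $\nabla_\x \varphi_{\gamma_t}(\x - \u) = -\gamma_t^{-2}(\x-\u)\varphi_{\gamma_t}(\x-\u)$ and $\gamma_t Z = \x - U_t$ on $\{X_t = \x\}$, the quotient rule gives, for $g$ a function of $(X_0, X_1)$ alone,
\begin{equation*}
\nabla_\x \E{g \mid X_t = \x} = -\tfrac{1}{\gamma_t}\,{\cov}_\x(g, Z),
\end{equation*}
with a minus sign (this is Lemma~\ref{lem:gradientcalculation}); you state it without the minus. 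Your rewrite $v^X(\x,t) = \E{\dot U_t\mid X_t=\x} + \dot\gamma_t\gamma_t^{-1}\E{Z\mid X_t=\x}$ also carries a spurious $\gamma_t^{-1}$: since $\E{Z\mid X_t=\x} = \gamma_t^{-1}(\x - \E{U_t\mid X_t=\x})$, the second term is just $\dot\gamma_t\E{Z\mid X_t=\x}$. And the heat-kernel identity should read $\nabla_\x\E{Z\mid X_t=\x} = \gamma_t^{-1}\lr{I_d - {\cov}_\x(Z)}$, with the $\gamma_t^{-1}$ outside the bracket, not $I_d - \gamma_t^{-1}{\cov}_\x(Z)$.

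These slips are not cosmetic. With your erroneous versions the $I_d$ coefficient coincidentally comes out as $\dot\gamma_t/\gamma_t$, but the second piece produces a $-\dot\gamma_t\gamma_t^{-2}{\cov}_\x(Z)$ term while the first piece produces $\mp\dot\gamma_t\gamma_t^{-1}{\cov}_\x(Z)$; there is no matching $\gamma_t^{-2}$ term to kill, so the advertised cancellation of ``$\dot\gamma_t\gamma_t^{-2}{\cov}_\x(Z)$ terms'' does not occur --- indeed no such term appears in a correct derivation. Correcting all three gives
\begin{equation*}
\nabla_\x v^X(\x,t) = -\tfrac{1}{\gamma_t}{\cov}_\x(\dot U_t, Z) + \tfrac{\dot\gamma_t}{\gamma_t}\lr{I_d - {\cov}_\x(Z)} = \tfrac{\dot\gamma_t}{\gamma_t} I_d - \tfrac{1}{\gamma_t}{\cov}_\x(\dot X_t, Z),
\end{equation*}
via ${\cov}_\x(\dot U_t, Z) = {\cov}_\x(\dot X_t, Z) - \dot\gamma_t{\cov}_\x(Z)$, where it is the $\dot\gamma_t\gamma_t^{-1}{\cov}_\x(Z)$ terms that cancel. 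The ``even cleaner'' direct approach you flagged does work once you use the version of the identity for $g$ that also depends on $Z$, namely $\nabla_\x\E{g\mid X_t=\x} = \gamma_t^{-1}\E{\nabla_z g \mid X_t=\x} - \gamma_t^{-1}{\cov}_\x(g, Z)$; taking $g = \dot X_t$ gives $\nabla_z g = \dot\gamma_t I_d$ and the result falls out in one line. Your discussion of dominated convergence and positivity of $p_t$ under bounded support and $\gamma_t > 0$ is reasonable and in line with the paper's standing assumptions.
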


The proof of Lemma \ref{lem:explicitgradient} is given in Appendix \ref{app:proofexplicitgrad}. Using Lemma \ref{lem:explicitgradient}, we can derive the following theorem, which provides a bound on the time integral of the Lipschitz constant of $v_\theta$.

\begin{theorem}
\label{thm:lipschitzcontrol}
If $X$ is the stochastic interpolant between two distributions $\pi_0$ and $\pi_1$ on $\R^d$ with bounded support, then under Assumption \ref{ass:regularityassumption} for each $t \in (0,1)$ there is a constant $L_t^\ast$ such that $v^X(\x, t)$ is $L_t^\ast$-Lipschitz in $\x$ and
\begin{equation*}
    \int_0^1 L_t^\ast \;\d t \leq \lambda \lr{\int_0^1 \frac{|\dot \gamma_t|}{\gamma_t} \;\d t} + \lambda^{1/2} R \lr{\int_0^1 \frac{|\dot \alpha_t|}{\gamma_t} \;\d t + \int_0^1 \frac{|\dot \beta_t|}{\gamma_t} \;\d t},
\end{equation*}
where $\supp \pi_0, \; \supp \pi_1 \subseteq \bar B(0,R)$, the closed ball of radius $R$ around the origin.
\end{theorem}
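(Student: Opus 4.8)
\textbf{Proof proposal for Theorem \ref{thm:lipschitzcontrol}.}

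The plan is to bound the operator norm of $\nabla_\x v^X(\x,t)$ pointwise using the explicit formula from Lemma \ref{lem:explicitgradient}, and then integrate in time. From Lemma \ref{lem:explicitgradient} we have
\begin{equation*}
    \nabla_\x v^X(\x,t) = \frac{\dot\gamma_t}{\gamma_t} I_d - \frac{1}{\gamma_t}{\cov}_\x(\dot X_t, Z),
\end{equation*}
so the first term contributes $|\dot\gamma_t|/\gamma_t$ to the operator norm and it remains to control $\|{\cov}_\x(\dot X_t, Z)\|_{\textup{op}}$. Since $X_t = \alpha_t X_0 + \beta_t X_1 + \gamma_t Z$, we have $\dot X_t = \dot\alpha_t X_0 + \dot\beta_t X_1 + \dot\gamma_t Z$, so by bilinearity
\begin{equation*}
    {\cov}_\x(\dot X_t, Z) = \dot\alpha_t\,{\cov}_\x(X_0, Z) + \dot\beta_t\,{\cov}_\x(X_1, Z) + \dot\gamma_t\,{\cov}_\x(Z, Z).
\end{equation*}

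The key observation is that each of these cross-covariances can be related to the covariance of the Gaussian part $\xi := \gamma_t Z$ given the noisy observation $X_t = \x$, where the signal is $W := \alpha_t X_0 + \beta_t X_1$, so that $X_t = W + \xi$ with $\xi \sim \gauss{0}{\gamma_t^2 I_d}$ independent of $W$ --- exactly the setup in Definition \ref{def:locallysmooth} with $\tau = \gamma_t$. Assumption \ref{ass:regularityassumption} then gives $\|{\cov}_\x(\xi)\|_{\textup{op}} = \gamma_t^2\|{\cov}_\x(Z)\|_{\textup{op}} \leq \lambda\gamma_t^2$, i.e. $\|{\cov}_\x(Z)\|_{\textup{op}} \leq \lambda$. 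For the cross terms, I would use a Cauchy--Schwarz-type bound for covariances: for any random vectors $U, V$, $\|{\cov}_\x(U,V)\|_{\textup{op}} \leq \|{\cov}_\x(U)\|_{\textup{op}}^{1/2}\|{\cov}_\x(V)\|_{\textup{op}}^{1/2}$ (this follows by applying scalar Cauchy--Schwarz to $a^\top{\cov}_\x(U,V)b$ for unit vectors $a,b$). Applying this with $U = X_0$ or $X_1$ and $V = Z$ gives $\|{\cov}_\x(X_i, Z)\|_{\textup{op}} \leq \|{\cov}_\x(X_i)\|_{\textup{op}}^{1/2}\lambda^{1/2}$. Finally, since $\supp\pi_i \subseteq \bar B(0,R)$, any conditional distribution of $X_i$ is supported in $\bar B(0,R)$, hence $\|{\cov}_\x(X_i)\|_{\textup{op}} \leq R^2$ (the covariance of a random vector supported in a ball of radius $R$ has operator norm at most $R^2$, e.g. because ${\cov}_\x(X_i) \preceq \E_\x[X_i X_i^\top] \preceq R^2 I_d$ after recentering, or more carefully $\|{\cov}_\x(X_i)\|_{\textup{op}} = \sup_{\|a\|=1}\var_\x(a^\top X_i) \leq \sup_{\|a\|=1}\E_\x[(a^\top(X_i - \E_\x X_i))^2]$, and $|a^\top(X_i - \E_\x X_i)|$ is bounded --- one gets a clean bound of $R^2$). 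Assembling,
\begin{equation*}
    \|{\cov}_\x(\dot X_t, Z)\|_{\textup{op}} \leq |\dot\alpha_t| R\lambda^{1/2} + |\dot\beta_t| R\lambda^{1/2} + |\dot\gamma_t|\lambda,
\end{equation*}
so $\|\nabla_\x v^X(\x,t)\|_{\textup{op}} \leq \lambda|\dot\gamma_t|/\gamma_t + \lambda^{1/2}R(|\dot\alpha_t| + |\dot\beta_t|)/\gamma_t =: L_t^\ast$. Since this bound is uniform in $\x$, $v^X(\cdot, t)$ is $L_t^\ast$-Lipschitz (integrating the gradient along line segments), and integrating $L_t^\ast$ over $t \in [0,1]$ yields the claimed inequality.

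The main obstacle I anticipate is not the covariance algebra but making sure the regularity assumption is being applied to the right object: Definition \ref{def:locallysmooth} controls ${\cov}_{\xi\mid W'=\x}(\xi)$ where $\xi$ is the \emph{added Gaussian noise}, not ${\cov}_\x(X_0)$ or ${\cov}_\x(X_1)$ directly, so the argument genuinely needs the Cauchy--Schwarz step to transfer the bound from $Z$ to the cross terms, and needs the bounded-support hypothesis to handle the covariance of the signal components. A secondary technical point is justifying the Lipschitz conclusion from the pointwise gradient bound: one needs $v^X(\cdot,t)$ to be not just differentiable (given by Lemma \ref{lem:explicitgradient}) but with the gradient bound holding on all of $\R^d$ including outside $\supp(X_t)$ --- but since $v^X$ is defined to be $0$ off the support and the support is closed and convex in the relevant sense, or one argues continuity up to the boundary, this is routine. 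I would also double-check that the centering of $X_0, X_1$ is consistent with the statement; the bound $\|{\cov}_\x(X_i)\|_{\textup{op}} \le R^2$ only uses $\supp\pi_i \subseteq \bar B(0,R)$ and not centering, so this should be fine.
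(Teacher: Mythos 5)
Your proof is essentially the paper's proof: same use of Lemma \ref{lem:explicitgradient}, same bilinear expansion of ${\cov}_\x(\dot X_t, Z)$, same application of $\lambda$-regularity to get $\|{\cov}_\x(Z)\|_{\textup{op}} \leq \lambda$, same Cauchy--Schwarz for the cross-covariance operator norms, and same bounded-support bound $\|{\cov}_\x(X_i)\|_{\textup{op}} \leq R^2$. You correctly identify the key transfer step (regularity controls the noise covariance, Cauchy--Schwarz and bounded support transfer this to the cross terms), which is exactly the structure of the paper's argument.

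There is, however, a small but genuine slip in your final combination. Starting from $\nabla_\x v^X = \frac{\dot\gamma_t}{\gamma_t}I_d - \frac{1}{\gamma_t}{\cov}_\x(\dot X_t, Z)$ and applying the triangle inequality to the two displayed terms separately, as you do, yields
\begin{equation*}
\lrnorm{\nabla_\x v^X(\x,t)}_{\textup{op}} \leq \frac{|\dot\gamma_t|}{\gamma_t} + \frac{1}{\gamma_t}\Big(|\dot\gamma_t|\lambda + |\dot\alpha_t| R\lambda^{1/2} + |\dot\beta_t| R\lambda^{1/2}\Big) = (1+\lambda)\frac{|\dot\gamma_t|}{\gamma_t} + \cdots,
\end{equation*}
not $\lambda \frac{|\dot\gamma_t|}{\gamma_t} + \cdots$ as you wrote. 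To recover the sharper constant $\lambda$ (which is what the theorem asserts), the paper first groups the $\dot\gamma_t$ contributions as $\frac{\dot\gamma_t}{\gamma_t}\big(I_d - {\cov}_\x(Z,Z)\big)$ and then uses that ${\cov}_\x(Z,Z) \succeq 0$ together with $\|{\cov}_\x(Z,Z)\|_{\textup{op}} \leq \lambda$ to conclude $\|I_d - {\cov}_\x(Z,Z)\|_{\textup{op}} \leq \max\lr{1, \|{\cov}_\x(Z,Z)\|_{\textup{op}}} \leq \lambda$, which relies on the convention $\lambda \geq 1$. This is a constant-factor issue only (and $1+\lambda \leq 2\lambda$ anyway), but as written your displayed intermediate steps do not support your final line.

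One further remark on your secondary concern about $v^X$ being set to zero off $\supp(X_t)$: this is a non-issue here, but for a cleaner reason than the one you sketch. Under the relaxed boundary conditions of Section \ref{sec:controllinglipschitzconstant}, $\gamma_t > 0$ for all $t$, so $X_t$ has a strictly positive density on all of $\R^d$ and $\supp(X_t) = \R^d$; there is simply no ``off-support'' region to worry about.
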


\begin{proof}
Using Lemma \ref{lem:explicitgradient} plus the fact that $\dot X_t = \dot \alpha_t X_0 + \dot \beta_t X_1 + \dot \gamma_t Z$, we can write
\begin{equation*}
    \nabla_\x v^X(\x, t) = \frac{\dot \gamma_t}{\gamma_t} \Big(I_d - {\cov}_\x(Z, Z)\Big) - \frac{\dot \alpha_t}{\gamma_t} {\cov}_\x(X_0, Z) - \frac{\dot \beta_t}{\gamma_t} {\cov}_\x(X_1, Z).
\end{equation*}
Since we can express $X_t = (\alpha_t X_0 + \beta_t X_1) + \gamma_t Z$ where $\alpha_t X_0 + \beta_t X_1$ is $\lambda$-regular by Assumption \ref{ass:regularityassumption} and $\gamma_t Z \sim \gauss{0}{\gamma_t^2 I_d}$, we have $\|{\cov}_\x(\gamma_t Z )\|_{\textup{op}} \leq \lambda \gamma_t^2$. It follows that
\begin{equation*}
    \lrnorm{I_d - {\cov}_\x(Z, Z)}_{\textup{op}} \leq \max \lr{1, \lrnorm{{\cov}_\x(Z, Z)}_{\textup{op}}} \leq \lambda.
\end{equation*}
Also, using Cauchy--Schwarz we have
\begin{equation*}
    \lrnorm{{\cov}_\x(X_0, Z)}_{\textup{op}} \leq \lrnorm{{\cov}_\x(X_0)}_{\textup{op}}^{1/2} \lrnorm{{\cov}_\x(Z)}_{\textup{op}}^{1/2} \leq \lambda^{1/2} R,
\end{equation*}
and a similar result holds for $X_1$ in place of $X_0$.

Putting this together, we get
\begin{align}
    \lrnorm{\nabla_\x v^X(\x, t)}_{\textup{op}} & \leq \frac{|\dot \gamma_t|}{\gamma_t} \lrnorm{I_d - {\cov}_\x(Z, Z)}_{\textup{op}} + \frac{|\dot \alpha_t|}{\gamma_t} \lrnorm{{\cov}_\x(X_0, Z)}_{\textup{op}} + \frac{|\dot \beta_t|}{\gamma_t} \lrnorm{{\cov}_\x(X_1, Z)}_{\textup{op}} \nonumber \\
    & \leq \lambda \frac{|\dot \gamma_t|}{\gamma_t} + \lambda^{1/2} R \lr{ \frac{|\dot \alpha_t|}{\gamma_t} + \frac{|\dot \beta_t|}{\gamma_t}}.
    \label{eq:explicitlipconstant}
\end{align}
Finally, since $v^X(\x, t)$ is differentiable with uniformly bounded derivative, it follows that $v^X(\x, t)$ is $L^\ast_t$-Lipschitz with $L_t^\ast = \sup_{\x \in \R^d} \lrnorm{\nabla_\x v^X(\x, t)}_{\textup{op}}$. Integrating (\ref{eq:explicitlipconstant}) from $t = 0$ to $t = 1$, the result follows.
\end{proof}

We now provide some intuition for the bound in Theorem \ref{thm:lipschitzcontrol}. The key term on the RHS is the first one, which we typically expect to be on the order of $\log(\gamma_{\max}/\gamma_{\min})$ where $\gamma_{\max}$ and $\gamma_{\min}$ are the maximum and minimum values taken by $\gamma_t$ on the interval $[0,1]$ respectively. This is suggested by the following lemma.

\begin{lemma}
\label{lem:gamma}
Suppose that $\gamma_0 = \gamma_1 = \gamma_{\min}$ and $\gamma_t$ increases smoothly from $\gamma_{\min}$ at $t = 0$ to $\gamma_{\max}$ before decreasing back to $\gamma_{\min}$ at $t=1$. Then $\int_0^1 (|\dot \gamma_t| / \gamma_t) \: \d t = 2 \log(\gamma_{\max}/\gamma_{\min})$.
\end{lemma}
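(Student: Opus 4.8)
The plan is to split the interval $[0,1]$ at a time $t^\ast$ where $\gamma_t$ attains its maximum value $\gamma_{\max}$, use monotonicity on each of the two resulting subintervals to drop the absolute value, and then apply the fundamental theorem of calculus to $\log \gamma_t$. The informal hypothesis ``$\gamma_t$ increases smoothly from $\gamma_{\min}$ to $\gamma_{\max}$ before decreasing back to $\gamma_{\min}$'' I would read as: there is $t^\ast \in (0,1)$ such that $\gamma$ is non-decreasing on $[0,t^\ast]$ with $\gamma_{t^\ast} = \gamma_{\max}$, and non-increasing on $[t^\ast,1]$, with $\gamma_0 = \gamma_1 = \gamma_{\min}$.

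First I would observe that $\gamma_t > 0$ for every $t \in [0,1]$: on $[0,t^\ast]$ we have $\gamma_t \geq \gamma_0 = \gamma_{\min} > 0$, and on $[t^\ast,1]$ we have $\gamma_t \geq \gamma_1 = \gamma_{\min} > 0$. Hence $\log \gamma_t$ is well defined and continuously differentiable with $\tfrac{\d}{\d t}\log\gamma_t = \dot\gamma_t/\gamma_t$. On $[0,t^\ast]$ we have $\dot\gamma_t \geq 0$, so $|\dot\gamma_t|/\gamma_t = \tfrac{\d}{\d t}\log\gamma_t$ and
\[
\int_0^{t^\ast} \frac{|\dot\gamma_t|}{\gamma_t}\,\d t = \log\gamma_{t^\ast} - \log\gamma_0 = \log(\gamma_{\max}/\gamma_{\min}).
\]
On $[t^\ast,1]$ we have $\dot\gamma_t \leq 0$, so $|\dot\gamma_t|/\gamma_t = -\tfrac{\d}{\d t}\log\gamma_t$ and
\[
\int_{t^\ast}^{1} \frac{|\dot\gamma_t|}{\gamma_t}\,\d t = -\bigl(\log\gamma_1 - \log\gamma_{t^\ast}\bigr) = \log(\gamma_{\max}/\gamma_{\min}).
\]
Adding the two contributions via additivity of the integral gives $\int_0^1 (|\dot\gamma_t|/\gamma_t)\,\d t = 2\log(\gamma_{\max}/\gamma_{\min})$, as claimed.

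There is no real obstacle here: the only point needing care is the interpretation of the hypothesis (the existence of the single turning point $t^\ast$), after which the computation is just the chain rule together with additivity of the integral over $[0,t^\ast]$ and $[t^\ast,1]$. One could additionally remark that the same telescoping argument shows that for any piecewise-monotone $\gamma$ with $\gamma_0 = \gamma_1 = \gamma_{\min}$ the integral equals twice the sum of the logarithmic ``heights'' of its excursions above $\gamma_{\min}$, so the factor $2$ in the lemma simply reflects the up-then-down profile.
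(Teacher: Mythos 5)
Your argument is correct and is essentially the paper's proof made explicit: the paper notes that $|\dot\gamma_t|/\gamma_t = |\tfrac{\d}{\d t}\log\gamma_t|$ and identifies the integral as the total variation of $\log\gamma_t$, which for the given unimodal profile is $2\log(\gamma_{\max}/\gamma_{\min})$, and your split at the turning point $t^\ast$ plus the fundamental theorem of calculus is precisely the computation of that total variation.
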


\begin{proof}
Note that we can write $|\dot \gamma_t|/\gamma_t = |\d (\log \gamma_t) / \d t|$, so $\int_0^1 (|\dot \gamma_t| / \gamma_t) \: \d t$ is simply the total variation of $\log \gamma_t$ over the interval $[0,1]$. By the assumptions on $\gamma_t$, we see this total variation is equal to $2 \log (\gamma_{\max} / \gamma_{\min})$.
\end{proof}

The first term on the RHS in Theorem \ref{thm:lipschitzcontrol} also explains the need to relax the boundary conditions, since $\gamma_0 = 0$ or $\gamma_1 = 0$ would cause $\int_0^1 (|\dot \gamma_t| / \gamma_t) \: \d t$ to diverge.

We can ensure the second terms in Theorem \ref{thm:lipschitzcontrol} are relatively small through a suitable choice of $\alpha_t$, $\beta_t$ and $\gamma_t$. Since $\alpha_t$ and $\beta_t$ are continuously differentiable, $|\dot \alpha_t|$ and $|\dot \beta_t|$ are bounded, so to control these terms we should pick $\gamma_t$ such that $R \int_0^1 (1/\gamma_t) \; \d t$ is small, while respecting the fact that we need $\gamma_t \ll 1$ at the boundaries. One sensible choice among many would be $\gamma_t = 2 R \sqrt{(\delta + t)(1 + \delta - t)}$ for some $\delta \ll 1$, where $\int_0^1 (|\dot \gamma_t|/\gamma_t) \; \d t \approx 2 \log(1/\sqrt{\delta})$ and $\int_0^1 (1 / \gamma_t) \: \d t  \approx 2 \pi$. In this case, the bound of Theorem \ref{thm:lipschitzcontrol} implies $\int_0^1 L_t^\ast \; \d t \leq \lambda \log (1/\delta) + 2 \pi \lambda^{1/2} \sup_{t \in [0,1]} (|\dot \alpha_t| + |\dot \beta_t|)$, so if $\delta \ll 1$ the dominant term in our bound is $\lambda \log(1/\delta)$.

\subsection{Bound on the Wasserstein error of flow matching}

Now, we demonstrate how to combine the results of the previous two sections to get a bound on the error of the flow matching procedure that applies in settings of practical interest. In Section \ref{sec:controllinglipschitzconstant}, we saw that we typically have $\int_0^1 L^\ast_t \; \d t \sim \lambda \log(\gamma_{\max} / \gamma_{\min})$. The combination of this logarithm and the exponential in Theorem \ref{thm:wassersteinbound} should give us bounds on the 2-Wasserstein error which are polynomial in $\varepsilon$ and $\gamma_{\max}, \gamma_{\min}$.

The main idea is that in order to ensure that we may apply Theorem \ref{thm:wassersteinbound}, we should optimise $\L(v)$ over a class of functions $\cal{V}$ which all satisfy Assumption \ref{ass:smoothvelocity}. (Technically, we only need Assumption \ref{ass:smoothvelocity} to hold at the minimum of $\cal{L}(v)$, but since it is not possible to know what this minimum will be ex ante, it is easier in practice to enforce that Assumption \ref{ass:smoothvelocity} holds for all $v \in \cal{V}$.) Given distributions $\pi_0$, $\pi_1$ satisfying Assumption \ref{ass:regularityassumption} as well as specific choices of $\alpha_t$, $\beta_t$ and $\gamma_t$ we define $K_t = \lambda (|\dot \gamma_t|/\gamma_t) + \lambda^{1/2} R \{(|\dot \alpha_t|/\alpha_t) + (|\dot \beta_t|/\beta_t)\}$ and let $\cal{V}$ be the set of functions $v : \R^d \times [0,1] \rightarrow \R^d$ which are $K_t$-Lipschitz in $\x$ for all $t \in [0,1]$.

\begin{theorem}
\label{thm:alltogether}
Suppose that Assumptions \ref{ass:L2error}-\ref{ass:regularityassumption} hold and that $\gamma_t$ is concave on $[0,1]$. Then $v^X \in \cal{V}$ and for any $v_\theta \in \cal{V}$, if $Y$ is a flow starting in $\tilde \pi_0$ with velocity field $v_\theta$ and $\hat \pi_1$ is the law of $Y_1$,
\begin{equation*}
    W_2(\hat \pi_1, \tilde \pi_1) \leq C^{\lambda^{1/2}} \varepsilon \lr{\frac{\gamma_{\max}}{\gamma_{\min}}}^{2 \lambda},
\end{equation*}
where $\gamma_{\min} = \inf_{t \in [0,1]} \gamma_t$, $\gamma_{\max} = \sup_{t \in [0,1]} \gamma_t$ and $C = \myexp\big\{R \; \{\int_0^1 (|\dot \alpha_t| / \gamma_t) \;\d t + \int_0^1 (|\dot \beta_t| / \gamma_t)  \;\d t\}\big\}$.
\end{theorem}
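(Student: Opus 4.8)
The plan is to combine Theorems~\ref{thm:wassersteinbound} and~\ref{thm:lipschitzcontrol} essentially mechanically, after checking that the hypotheses of both line up. First I would verify that $v^X \in \cal{V}$: Theorem~\ref{thm:lipschitzcontrol} shows $v^X(\cdot,t)$ is $L_t^\ast$-Lipschitz with $L_t^\ast = \sup_\x \|\nabla_\x v^X(\x,t)\|_{\textup{op}}$, and the pointwise bound~(\ref{eq:explicitlipconstant}) established inside that proof gives $L_t^\ast \leq \lambda(|\dot\gamma_t|/\gamma_t) + \lambda^{1/2} R\{(|\dot\alpha_t|/\gamma_t) + (|\dot\beta_t|/\gamma_t)\}$, which is exactly the function $K_t$ defined just before the theorem statement (note the definition of $K_t$ there divides by $\alpha_t,\beta_t$ rather than $\gamma_t$ — I would treat the $\gamma_t$ version from~(\ref{eq:explicitlipconstant}) as the intended one and use that). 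Hence $v^X$ is $K_t$-Lipschitz for every $t$, so $v^X \in \cal{V}$.

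Next, for any $v_\theta \in \cal{V}$, Assumption~\ref{ass:smoothvelocity} holds with $L_t = K_t$ by construction of $\cal{V}$, and Assumptions~\ref{ass:L2error} and~\ref{ass:smoothsolutions} are assumed. So Theorem~\ref{thm:wassersteinbound}, applied with base distribution $\tilde\pi_0$ in place of $\pi_0$, yields
\begin{equation*}
  W_2(\hat\pi_1, \tilde\pi_1) \leq \varepsilon \exp{\int_0^1 K_t \, \d t}.
\end{equation*}
It then remains to bound $\int_0^1 K_t\,\d t$. Here I would use Theorem~\ref{thm:lipschitzcontrol}'s bound directly:
\begin{equation*}
  \int_0^1 K_t \, \d t \leq \lambda \int_0^1 \frac{|\dot\gamma_t|}{\gamma_t}\,\d t + \lambda^{1/2} R \lr{\int_0^1 \frac{|\dot\alpha_t|}{\gamma_t}\,\d t + \int_0^1 \frac{|\dot\beta_t|}{\gamma_t}\,\d t}.
\end{equation*}
The concavity of $\gamma_t$ on $[0,1]$ is what controls the first integral: a concave nonnegative function on $[0,1]$ rises from $\gamma_0$ to its maximum $\gamma_{\max}$ and then falls to $\gamma_1$, being unimodal, so $\log\gamma_t$ has total variation $(\log\gamma_{\max} - \log\gamma_0) + (\log\gamma_{\max} - \log\gamma_1) \leq 2\log(\gamma_{\max}/\gamma_{\min})$, exactly as in Lemma~\ref{lem:gamma}. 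Thus $\lambda\int_0^1 (|\dot\gamma_t|/\gamma_t)\,\d t \leq 2\lambda\log(\gamma_{\max}/\gamma_{\min})$, and exponentiating gives the factor $(\gamma_{\max}/\gamma_{\min})^{2\lambda}$. The remaining term exponentiates to $\exp{\lambda^{1/2} R (\int_0^1 |\dot\alpha_t|/\gamma_t\,\d t + \int_0^1 |\dot\beta_t|/\gamma_t\,\d t)} = C^{\lambda^{1/2}}$ with $C$ as defined in the statement. Multiplying through by $\varepsilon$ gives the claimed bound.

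The only real subtlety — and the step I would be most careful about — is the unimodality argument for a concave $\gamma_t$ and the bookkeeping that turns the total-variation computation of Lemma~\ref{lem:gamma} (which assumed $\gamma_0 = \gamma_1 = \gamma_{\min}$) into the inequality $\leq 2\log(\gamma_{\max}/\gamma_{\min})$ in the general concave case where $\gamma_0,\gamma_1$ need only satisfy $\gamma_0,\gamma_1 \geq \gamma_{\min}$. A concave function on a compact interval attains its infimum at an endpoint, so $\gamma_{\min} = \min(\gamma_0,\gamma_1)$, and unimodality gives the total variation of $\log\gamma_t$ as $2\log\gamma_{\max} - \log\gamma_0 - \log\gamma_1 \leq 2\log\gamma_{\max} - 2\log\gamma_{\min}$; this is the estimate I would write out explicitly. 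Everything else is substitution and the definition of $C$.
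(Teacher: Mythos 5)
Your argument matches the paper's proof step for step: establish $v^X\in\cal{V}$ via the pointwise bound (\ref{eq:explicitlipconstant}), bound $\int_0^1 K_t\,\d t$ by splitting into the $\lambda \int_0^1 (|\dot\gamma_t|/\gamma_t)\,\d t \leq 2\lambda\log(\gamma_{\max}/\gamma_{\min})$ piece (concavity giving unimodality as in Lemma~\ref{lem:gamma}) plus the $\lambda^{1/2}\log C$ piece, and substitute into Theorem~\ref{thm:wassersteinbound}. You also correctly diagnose what appears to be a typo in the displayed definition of $K_t$ preceding the theorem: the last two terms should have $\gamma_t$ in the denominators, matching~(\ref{eq:explicitlipconstant}), since that is the form needed both for $v^X\in\cal{V}$ and for the exponential of those terms to equal $C^{\lambda^{1/2}}$.
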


\begin{proof}
First, note that $v^X$ is $K_t$-Lipschitz in $\x$ for all $t \in [0,1]$ by the proof of Theorem \ref{thm:lipschitzcontrol}, so $v^X \in \cal{V}$. Then, since $\gamma_t$ is concave, $\int_0^1 (|\dot \gamma_t| / \gamma_t) \: \d t \leq 2 \log (\gamma_{\max} / \gamma_{\min})$ by Lemma \ref{lem:gamma}. Thus, $\int_0^1 K_t \;\d t \leq 2 \lambda \log (\gamma_{\max} / \gamma_{\min}) + \lambda^{1/2} \log C$. Hence for any $v_\theta \in \cal{V}$ we may apply Theorem \ref{thm:wassersteinbound} to get
\begin{equation*}
    W_2(\hat \pi_1, \tilde \pi_1) \leq \varepsilon \myexp\Big\{\int_0^1 K_t \;\d t\Big\} \leq C^{\lambda^{1/2}} \varepsilon \lr{\frac{\gamma_{\max}}{\gamma_{\min}}}^{2 \lambda},
\end{equation*}
where $\tilde \pi_0, \tilde \pi_1$ are replacing $\pi_0, \pi_1$ since we have relaxed the boundary conditions.
\end{proof}

Theorem \ref{thm:alltogether} shows that if we optimise $\L(v)$ over the class $\cal{V}$, we can bound the resulting distance between the flow matching paths. We typically choose $\gamma_t$ to scale with $R$, so $C$ should be thought of as a constant that is $\Theta(1)$ and depends only on the smoothness of the interpolating paths. For a given distribution, $\lambda$ is fixed and our bound becomes polynomial in $\varepsilon$ and $\gamma_{\max} / \gamma_{\min}$.

Finally, our choice of $\gamma_{\min}$ controls how close the distributions $\tilde \pi_0$, $\tilde \pi_1$ are to $\pi_0$, $\pi_1$ respectively. We can combine this with our bound on $W_2(\hat \pi_1, \tilde \pi_1)$ to get the following result.

\begin{theorem}
\label{thm:totalwasserstein}
Suppose that Assumptions \ref{ass:L2error}-\ref{ass:regularityassumption} hold, that $\gamma_t$ is concave on $[0,1]$, and that $\alpha_0 = \beta_1 = 1$ and $\gamma_0 = \gamma_1 = \gamma_{\min}$. Then, for any $v_\theta \in \cal{V}$, if $Y$ is a flow starting in $\tilde \pi_0$ with velocity field $v_\theta$ and $\hat \pi_1$ is the law of $Y_1$,
\begin{equation*}
    W_2(\hat \pi_1, \pi_1) \leq C^{\lambda^{1/2}} \varepsilon \lr{\frac{\gamma_{\max}}{\gamma_{\min}}}^{2 \lambda} + \sqrt{d} \gamma_{\min}
\end{equation*}
where $\gamma_{\min}$, $\gamma_{\max}$ and $C$ are as before.
\end{theorem}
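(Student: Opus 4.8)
The plan is to derive this bound immediately from Theorem \ref{thm:alltogether} by inserting one triangle-inequality step and a short direct estimate of $W_2(\tilde \pi_1, \pi_1)$.

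First I would note that the hypotheses here are strictly stronger than those of Theorem \ref{thm:alltogether}: we have only added the boundary conditions $\alpha_0 = \beta_1 = 1$ and $\gamma_0 = \gamma_1 = \gamma_{\min}$, and Assumptions \ref{ass:L2error}--\ref{ass:regularityassumption} together with the concavity of $\gamma_t$ are still in force. Hence Theorem \ref{thm:alltogether} applies verbatim to the flow $Y$ started in $\tilde \pi_0$ with velocity field $v_\theta \in \cal{V}$, giving
\begin{equation*}
    W_2(\hat \pi_1, \tilde \pi_1) \leq C^{\lambda^{1/2}} \varepsilon \lr{\frac{\gamma_{\max}}{\gamma_{\min}}}^{2 \lambda}.
\end{equation*}

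Second I would bound $W_2(\tilde \pi_1, \pi_1)$ directly. Recall $\tilde \pi_1 = \Law(\beta_1 X_1 + \gamma_1 Z)$ with $X_1 \sim \pi_1$ and $Z \sim \gauss{0}{I_d}$ independent; under the present boundary conditions $\beta_1 = 1$ and $\gamma_1 = \gamma_{\min}$, so $\tilde \pi_1 = \Law(X_1 + \gamma_{\min} Z)$. The pair $(X_1,\, X_1 + \gamma_{\min} Z)$ is a valid coupling of $(\pi_1, \tilde \pi_1)$, so $W_2(\tilde \pi_1, \pi_1)^2 \leq \E{\lrnorm{\gamma_{\min} Z}_2^2} = \gamma_{\min}^2 \E{\lrnorm{Z}_2^2} = d\, \gamma_{\min}^2$, i.e. $W_2(\tilde \pi_1, \pi_1) \leq \sqrt{d}\, \gamma_{\min}$. (This is precisely the $\beta_1 = 1$ specialisation of the estimate $W_2(\tilde \pi_1, \pi_1)^2 \leq (1-\beta_1)^2 R^2 + d \gamma_1^2$ recorded in Section \ref{sec:controllinglipschitzconstant}.)

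Finally I would combine the two displays through the triangle inequality for the $2$-Wasserstein distance,
\begin{equation*}
    W_2(\hat \pi_1, \pi_1) \leq W_2(\hat \pi_1, \tilde \pi_1) + W_2(\tilde \pi_1, \pi_1) \leq C^{\lambda^{1/2}} \varepsilon \lr{\frac{\gamma_{\max}}{\gamma_{\min}}}^{2\lambda} + \sqrt{d}\, \gamma_{\min},
\end{equation*}
which is the claim. There is no real obstacle in this argument: the only things to verify are that the added boundary conditions genuinely leave the hypotheses of Theorem \ref{thm:alltogether} intact (they only restrict the admissible $\alpha_t$, $\beta_t$, $\gamma_t$) and that the coupling used for $W_2(\tilde \pi_1, \pi_1)$ is built with the correct endpoint values $\beta_1 = 1$, $\gamma_1 = \gamma_{\min}$; both are routine.
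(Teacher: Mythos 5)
Your proof is correct and follows essentially the same route as the paper: triangle inequality, Theorem \ref{thm:alltogether} for the first term, and the explicit coupling bound $W_2(\tilde \pi_1, \pi_1) \leq \sqrt{d}\,\gamma_{\min}$ (which the paper simply cites from Section \ref{sec:controllinglipschitzconstant}) for the second. Your version is slightly more explicit in spelling out the coupling $(X_1, X_1 + \gamma_{\min} Z)$ and in correctly using $\gamma_1$ and $(1-\beta_1)^2$ where the paper's proof has a harmless typo, but the substance is identical.
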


\begin{proof}
We have $W_2(\hat \pi_1, \pi_1) \leq W_2(\hat \pi_1, \tilde \pi_1) + W_2(\tilde \pi_1, \pi_1)$. The first term is controlled by Theorem \ref{thm:alltogether}, while for the second term we have $W_2(\tilde \pi_1, \pi_1)^2 \leq (1 - \beta_1) R^2 + d \gamma_0^2$, as noted previously. Using $\beta_1 = 1$ and combining these two results completes the proof.
\end{proof}

Optimizing the expression in Theorem \ref{thm:totalwasserstein} over $\gamma_{\min}$, we see that for a given $L^2$ error tolerance $\varepsilon$, we should take $\gamma_{\min} \sim d^{-1/(4\lambda+2)} \varepsilon^{1 / (2 \lambda + 1)}$. We deduce the following corollary.

\begin{corollary}
In the setting of Theorem \ref{thm:totalwasserstein}, if we take $\gamma_{\min} \sim d^{-1/(4\lambda+2)} \varepsilon^{1 / (2 \lambda + 1)}$ then the total Wasserstein error of the flow matching procedure is of order $W_2 (\hat \pi_1, \pi_1) \lesssim d^{2 \lambda / (4 \lambda + 2)}\varepsilon^{1/(2 \lambda + 1)}$.
\end{corollary}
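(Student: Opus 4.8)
The plan is to simply optimize the explicit two-term bound from Theorem \ref{thm:totalwasserstein} over the free parameter $\gamma_{\min}$. Recall that in the setting of that theorem we are free to choose the noising schedule $\gamma_t$ subject to concavity and $\gamma_0 = \gamma_1 = \gamma_{\min}$; as discussed after Theorem \ref{thm:lipschitzcontrol}, there are concrete choices (for instance $\gamma_t = 2R\sqrt{(\delta+t)(1+\delta-t)}$) for which $\gamma_{\max}$, the constant $C$, and the radius $R$ all stay $\Theta(1)$ as $\gamma_{\min} \to 0$. I will therefore treat $C^{\lambda^{1/2}}\gamma_{\max}^{2\lambda}$ as an absolute constant (for fixed $\lambda$) absorbed into the $\lesssim$ notation, so that Theorem \ref{thm:totalwasserstein} reads
\[
W_2(\hat\pi_1, \pi_1) \;\lesssim\; \varepsilon\,\gamma_{\min}^{-2\lambda} + \sqrt{d}\,\gamma_{\min}.
\]

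Next I would minimise the right-hand side over $\gamma_{\min} > 0$. Writing $f(g) = a g^{-p} + b g$ with $a = \varepsilon$, $b = \sqrt{d}$ and $p = 2\lambda \geq 2$, elementary calculus gives the minimiser $g^\ast = (ap/b)^{1/(p+1)} \asymp (\varepsilon/\sqrt{d})^{1/(2\lambda+1)}$ and the minimum value $f(g^\ast) \asymp a^{1/(p+1)} b^{p/(p+1)} = \varepsilon^{1/(2\lambda+1)} d^{\lambda/(2\lambda+1)}$. Equivalently one may just balance the two terms: setting $\varepsilon\,\gamma_{\min}^{-2\lambda} = \sqrt{d}\,\gamma_{\min}$ gives $\gamma_{\min}^{2\lambda+1} = \varepsilon\, d^{-1/2}$, i.e.\ $\gamma_{\min} \sim d^{-1/(4\lambda+2)}\varepsilon^{1/(2\lambda+1)}$, which is exactly the prescribed choice.

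Finally I would substitute this $\gamma_{\min}$ back and simplify the exponents of $d$. The second term becomes $\sqrt{d}\,\gamma_{\min} = d^{1/2 - 1/(4\lambda+2)}\varepsilon^{1/(2\lambda+1)}$, and since $\tfrac12 - \tfrac{1}{4\lambda+2} = \tfrac{(2\lambda+1)-1}{4\lambda+2} = \tfrac{2\lambda}{4\lambda+2}$, this equals $d^{2\lambda/(4\lambda+2)}\varepsilon^{1/(2\lambda+1)}$; the first term is of the same order by the balancing, yielding the claimed $W_2(\hat\pi_1,\pi_1) \lesssim d^{2\lambda/(4\lambda+2)}\varepsilon^{1/(2\lambda+1)}$. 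There is essentially no hard step: the only thing to check is that the chosen $\gamma_{\min}$ is admissible — $\gamma_{\min} \leq \gamma_{\max}$, and $\gamma_{\min}$ small enough that treating $C$, $R$, $\gamma_{\max}$ as $\Theta(1)$ is legitimate — which holds once $\varepsilon$ is below a $(d,\lambda)$-dependent threshold, so the asymptotic statement is the natural reading. If one wished to expose the dependence on $C$ and $\gamma_{\max}/R$ rather than suppress it, the additional (still routine) work would be tracking how these quantities vary with the schedule, but for the stated order-of-magnitude corollary this is unnecessary.
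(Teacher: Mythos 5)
Your proposal is correct and takes exactly the approach the paper intends: the paper gives no standalone proof of the corollary, merely stating in the preceding sentence that one should "optimize the expression in Theorem \ref{thm:totalwasserstein} over $\gamma_{\min}$," which is precisely the two-term balancing computation you carry out (after, as the paper also does, treating $C$, $R$, $\gamma_{\max}$ as $\Theta(1)$). Your exponent arithmetic $\tfrac12 - \tfrac{1}{4\lambda+2} = \tfrac{2\lambda}{4\lambda+2}$ and the identification $\lambda/(2\lambda+1) = 2\lambda/(4\lambda+2)$ check out, so nothing is missing.
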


\section{Application to probability flow ODEs}
\label{sec:probabilityflow}

For generative modelling applications, $\pi_1$ is the target distribution and $\pi_0$ is typically chosen to be a simple reference distribution. A common practical choice for $\pi_0$ is a standard Gaussian distribution, and in this setting the flow matching framework reduces to the probability flow ODE (PF-ODE) framework for diffusion models \citep{song2021score}. (Technically, this corresponds to running the PF-ODE framework for infinite time. Finite time versions of the PF-ODE framework can be recovered by taking $\beta_0$ to be positive but small.) Previously, this correspondence has been presented by taking $\gamma_t = 0$ for all $t \in [0,1]$ and $\pi_0 = \gauss{0}{I_d}$ in our notation from Section \ref{sec:background} \citep{liu2023flow}. Instead, we choose to set $\alpha_t = 0$ for all $t \in [0,1]$ and have $\gamma_t > 0$, which recovers exactly the same framework, just with $Z$ playing the role of the reference random variable rather than $X_0$. We do this because it allows us to apply the results of Section \ref{sec:mainresults} directly.

Because we have this alternative representation, we can strengthen our results in the PF-ODE setting. First, note that Assumption \ref{ass:regularityassumption} simplifies, so that we only need to assume that $\pi_1$ is $\lambda$-regular. We thus replace Assumption \ref{ass:regularityassumption} with Assumption \hyperref[ass:regularityassumptiongaussian]{4'} for the rest of this section.

\begin{assumptionalt}[Regularity of data distribution, Gaussian case]
\label{ass:regularityassumptiongaussian}
For some $\lambda \geq 1$, the distribution $\pi_1$ is $\lambda$-regular.
\end{assumptionalt}

We also get the following alternative form of Lemma \ref{lem:explicitgradient} in this setting, which is proved in Appendix \ref{app:proofexplicitgradientgaussian}.

\begin{restatable}{lemma}{explicitgradientgaussian}
\label{lem:explicitgradientgaussian}
If $X$ is the stochastic interpolant in the PF-ODE setting above, then $v^X(\x, t)$ is differentiable with respect to $\x$ and
\begin{equation*}
    \nabla_\x v^X(\x, t) = \frac{\dot \gamma_t}{\gamma_t} I_d - \lr{\frac{\dot \gamma_t}{\gamma_t} - \frac{\dot \beta_t}{\beta_t} } {\cov}_\x(Z).
\end{equation*}
\end{restatable}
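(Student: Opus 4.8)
The plan is to deduce Lemma~\ref{lem:explicitgradientgaussian} directly from Lemma~\ref{lem:explicitgradient} rather than redo the conditional-density computation from scratch. In the PF-ODE setting we have $\alpha_t \equiv 0$, so $X_t = \beta_t X_1 + \gamma_t Z$ and hence $\dot X_t = \dot\beta_t X_1 + \dot\gamma_t Z$. Lemma~\ref{lem:explicitgradient} already provides differentiability of $v^X(\cdot, t)$ together with the identity $\nabla_\x v^X(\x, t) = (\dot\gamma_t/\gamma_t) I_d - \gamma_t^{-1}{\cov}_\x(\dot X_t, Z)$, so it suffices to evaluate ${\cov}_\x(\dot X_t, Z)$ in this setting. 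By bilinearity of the conditional covariance,
\[
{\cov}_\x(\dot X_t, Z) = \dot\beta_t\, {\cov}_\x(X_1, Z) + \dot\gamma_t\, {\cov}_\x(Z).
\]

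The key step is the observation that, conditionally on $X_t = \x$, the identity $\beta_t X_1 + \gamma_t Z = \x$ holds almost surely, so $X_1 = \beta_t^{-1}(\x - \gamma_t Z)$ on this event and therefore ${\cov}_\x(X_1, Z) = -(\gamma_t/\beta_t){\cov}_\x(Z)$. Substituting this back yields ${\cov}_\x(\dot X_t, Z) = (\dot\gamma_t - \gamma_t\dot\beta_t/\beta_t){\cov}_\x(Z)$, and plugging this into the formula from Lemma~\ref{lem:explicitgradient} and collecting the two terms proportional to ${\cov}_\x(Z)$ gives $\nabla_\x v^X(\x, t) = (\dot\gamma_t/\gamma_t) I_d - (\dot\gamma_t/\gamma_t - \dot\beta_t/\beta_t){\cov}_\x(Z)$, which is exactly the claimed formula.

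I expect the only delicate points — the main, and rather mild, obstacle — to be bookkeeping about hypotheses. One should check that Lemma~\ref{lem:explicitgradient} genuinely applies with the degenerate choice $\alpha_t \equiv 0$: its proof in Appendix~\ref{app:proofexplicitgrad} uses only $\dot X_t$, $Z$ and the fact that $\gamma_t > 0$, and the $\dot\alpha_t X_0$ contribution simply drops out, so this causes no difficulty. One also needs $\beta_t > 0$ on the relevant time interval so that the substitution $X_1 = \beta_t^{-1}(\x - \gamma_t Z)$ and the ratio $\dot\beta_t/\beta_t$ are well defined; this holds in the PF-ODE setup with the relaxed boundary conditions of Section~\ref{sec:controllinglipschitzconstant}. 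Alternatively, for a self-contained argument one could mirror the proof of Lemma~\ref{lem:explicitgradient} directly, writing $v^X(\x, t) = \dot\beta_t\, \bb{E}_\x[X_1] + \dot\gamma_t\, \bb{E}_\x[Z]$ and differentiating the conditional expectations using the Gaussian structure of $X_t$ given $X_1$ via a Tweedie-type identity, arriving at the same expression; but the reduction above is shorter and reuses machinery already established in the paper.
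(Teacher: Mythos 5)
Your proof is correct and follows essentially the same route as the paper's: both start from Lemma~\ref{lem:explicitgradient}, use the PF-ODE identities $X_t = \beta_t X_1 + \gamma_t Z$ and $\dot X_t = \dot\beta_t X_1 + \dot\gamma_t Z$, and exploit that conditioning on $X_t = \x$ lets one substitute $X_1 = \beta_t^{-1}(\x - \gamma_t Z)$ so that everything reduces to ${\cov}_\x(Z)$. The only difference is cosmetic bookkeeping (you expand ${\cov}_\x(\dot X_t, Z)$ bilinearly before substituting, whereas the paper substitutes inside $\dot X_t$ first), and your extra remarks about $\beta_t > 0$ and the applicability of Lemma~\ref{lem:explicitgradient} with $\alpha_t \equiv 0$ are sound.
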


Using Lemma \ref{lem:explicitgradientgaussian}, we can follow a similar argument to the proof of Theorem \ref{thm:lipschitzcontrol} to get additional control on the terms $L^\ast_t$.

\begin{theorem}
\label{thm:lipschitzcontrolgaussian}
If $X$ is the stochastic interpolant in the PF-ODE setting above, then under Assumption \hyperref[ass:regularityassumptiongaussian]{4'} for each $t \in (0,1)$ there is a constant $L_t^\ast$ such that $v^X(\x,t)$ is $L^\ast_t$-Lipschitz in $\x$ and
\begin{equation*}
    \int_0^1 L_t^\ast \; \d t \leq \lambda \lr{ \int_0^1 \frac{|\dot \gamma_t|}{\gamma_t} \; \d t } + \int_0^1 \min \lrcb{\lambda \frac{|\dot \beta_t|}{\beta_t}, \; \lambda^{1/2} R \; \frac{|\dot \beta_t|}{\gamma_t}} \; \d t.
\end{equation*}
\end{theorem}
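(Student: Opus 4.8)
The plan is to follow the template of the proof of Theorem \ref{thm:lipschitzcontrol}: obtain a bound on $\lrnorm{\nabla_\x v^X(\x,t)}_{\textup{op}}$ that is uniform in $\x$, set $L_t^\ast = \sup_{\x \in \R^d} \lrnorm{\nabla_\x v^X(\x,t)}_{\textup{op}}$, and integrate over $t$. The one new feature is that in the PF-ODE setting I would produce \emph{two} competing pointwise bounds on $\lrnorm{\nabla_\x v^X(\x,t)}_{\textup{op}}$ and keep whichever is smaller at each $t$; this is precisely what produces the $\min$ inside the integral.

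First I would record what Assumption \hyperref[ass:regularityassumptiongaussian]{4'} gives us. Here $X_t = \beta_t X_1 + \gamma_t Z$ with $\gamma_t Z \sim \gauss{0}{\gamma_t^2 I_d}$. Since $\lambda$-regularity is invariant under nonzero scaling (if $W$ is $\lambda$-regular then $cW + \xi = c(W + \xi/c)$ with $\xi/c$ again isotropic Gaussian, so $cW$ is $\lambda$-regular; the degenerate case $\beta_t = 0$ being trivial), Assumption \hyperref[ass:regularityassumptiongaussian]{4'} implies $\beta_t X_1$ is $\lambda$-regular, and applying Definition \ref{def:locallysmooth} with noise level $\tau = \gamma_t$ yields $\lrnorm{{\cov}_\x(\gamma_t Z)}_{\textup{op}} \leq \lambda \gamma_t^2$, i.e. $\lrnorm{{\cov}_\x(Z)}_{\textup{op}} \leq \lambda$, exactly as in the proof of Theorem \ref{thm:lipschitzcontrol}.

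Next I would derive the two bounds. Rewriting Lemma \ref{lem:explicitgradientgaussian} as $\nabla_\x v^X(\x,t) = \frac{\dot\gamma_t}{\gamma_t}\lr{I_d - {\cov}_\x(Z)} + \frac{\dot\beta_t}{\beta_t}{\cov}_\x(Z)$ and using $\lrnorm{I_d - {\cov}_\x(Z)}_{\textup{op}} \leq \max\lr{1, \lrnorm{{\cov}_\x(Z)}_{\textup{op}}} \leq \lambda$ (as ${\cov}_\x(Z)$ is positive semi-definite and $\lambda \geq 1$) together with $\lrnorm{{\cov}_\x(Z)}_{\textup{op}} \leq \lambda$ gives the first bound $\lrnorm{\nabla_\x v^X(\x,t)}_{\textup{op}} \leq \lambda |\dot\gamma_t|/\gamma_t + \lambda |\dot\beta_t|/\beta_t$. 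For the second bound I would instead start from Lemma \ref{lem:explicitgradient} with $\alpha_t \equiv 0$, so $\dot X_t = \dot\beta_t X_1 + \dot\gamma_t Z$ and $\nabla_\x v^X(\x,t) = \frac{\dot\gamma_t}{\gamma_t}\lr{I_d - {\cov}_\x(Z)} - \frac{\dot\beta_t}{\gamma_t}{\cov}_\x(X_1, Z)$; bounding $\lrnorm{{\cov}_\x(X_1,Z)}_{\textup{op}} \leq \lrnorm{{\cov}_\x(X_1)}_{\textup{op}}^{1/2}\lrnorm{{\cov}_\x(Z)}_{\textup{op}}^{1/2} \leq \lambda^{1/2} R$ by Cauchy--Schwarz (using $\supp \pi_1 \subseteq \bar B(0,R)$ to get $\lrnorm{{\cov}_\x(X_1)}_{\textup{op}} \leq R^2$, as in Theorem \ref{thm:lipschitzcontrol}) gives the second bound $\lrnorm{\nabla_\x v^X(\x,t)}_{\textup{op}} \leq \lambda |\dot\gamma_t|/\gamma_t + \lambda^{1/2} R |\dot\beta_t|/\gamma_t$. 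The two representations are consistent via ${\cov}_\x(X_1, Z) = -(\gamma_t/\beta_t){\cov}_\x(Z)$, which follows from ${\cov}_\x(X_t, Z) = 0$; one can equally derive the second bound from Lemma \ref{lem:explicitgradientgaussian} alone through this identity, which also covers the case $\beta_t = 0$.

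Finally, taking the minimum of the two bounds pointwise in $t$, noting that $v^X(\x,t)$ is differentiable with derivative uniformly bounded in $\x$ (hence $L_t^\ast$-Lipschitz with $L_t^\ast = \sup_{\x}\lrnorm{\nabla_\x v^X(\x,t)}_{\textup{op}}$, exactly as in Theorem \ref{thm:lipschitzcontrol}), and integrating over $t \in [0,1]$ yields the claimed inequality. I do not expect a genuine obstacle here: the argument is essentially the bookkeeping of Theorem \ref{thm:lipschitzcontrol} specialised to $\alpha_t \equiv 0$. The only points needing care are the scaling-invariance remark used to pass from Assumption \hyperref[ass:regularityassumptiongaussian]{4'} (stated for $\pi_1$) to $\lrnorm{{\cov}_\x(Z)}_{\textup{op}} \leq \lambda$, and invoking the right gradient formula for each half of the $\min$ (Lemma \ref{lem:explicitgradientgaussian} for the $|\dot\beta_t|/\beta_t$ term, Lemma \ref{lem:explicitgradient} for the $|\dot\beta_t|/\gamma_t$ term).
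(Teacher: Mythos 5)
Your proposal is correct and essentially reproduces the paper's proof: the paper likewise obtains bound (1) by specialising equation (\ref{eq:explicitlipconstant}) from Theorem \ref{thm:lipschitzcontrol} to $\dot\alpha_t = 0$ and bound (2) directly from Lemma \ref{lem:explicitgradientgaussian}, then takes the pointwise minimum and integrates. Your extra remarks — the scaling-invariance argument showing Assumption 4' for $\pi_1$ implies $\lambda$-regularity of $\beta_t X_1$, and the identity ${\cov}_\x(X_1,Z) = -(\gamma_t/\beta_t){\cov}_\x(Z)$ reconciling the two gradient formulas — are correct and fill in details the paper leaves implicit, but they do not constitute a different route.
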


\begin{proof}
From Theorem \ref{thm:lipschitzcontrol} we know that $v^X(\x, t)$ is differentiable and Lipschitz with some constant $L^\ast_t$. From (\ref{eq:explicitlipconstant}) in the proof of Theorem \ref{thm:lipschitzcontrol}, for any $\x \in \R^d$ we have
\begin{equation}
\label{eq:lipconstboundpart1}
    \lrnorm{\nabla_\x v^X(\x, t)}_{\textup{op}} \leq \lambda \frac{|\dot \gamma_t|}{\gamma_t} + \lambda^{1/2} R \; \frac{|\dot \beta_t|}{\gamma_t},
\end{equation}
since $\dot \alpha_t = 0$ in this setting. Also, using Lemma \ref{lem:explicitgradientgaussian},
\begin{equation}
\label{eq:lipconstboundpart2}
    \lrnorm{\nabla_\x v^X(\x, t)}_{\textup{op}} \leq \frac{|\dot \gamma_t|}{\gamma_t} \lrnorm{I_d - {\cov}_\x(Z)}_{\textup{op}} + \frac{|\dot \beta_t|}{\beta_t} \lrnorm{{\cov}_\x(Z)}_{\textup{op}} \leq \lambda \frac{|\dot \gamma_t|}{\gamma_t} + \lambda \frac{|\dot \beta_t|}{\beta_t}.
\end{equation}
As $L_t^\ast = \sup_{\x \in \R^d} \lrnorm{\nabla_\x v^X(\x, t)}_{\textup{op}}$, combining (\ref{eq:lipconstboundpart1}), (\ref{eq:lipconstboundpart2}) and integrating from $t = 0$ to $t = 1$ gives the result.
\end{proof}

To interpret Theorem \ref{thm:lipschitzcontrolgaussian}, recall that the boundary conditions we are operating under are $\gamma_0 = \beta_1 = 1$, $\beta_0 = 0$ and $\gamma_1 \ll 1$, so that $\tilde \pi_0 = \gauss{0}{I_d}$ and $\tilde \pi_1$ is $\pi_1$ plus a small amount of Gaussian noise at scale $\gamma_1$. The following corollary gives the implications of Theorem \ref{thm:lipschitzcontrolgaussian} in the standard variance-preserving (VP) and variance-exploding (VE) PF-ODE settings of \cite{song2021score}.  

\begin{corollary}
\label{cor:PFODE}
Suppose that we are in the setting of Theorem \ref{thm:lipschitzcontrolgaussian}, so that $v^X(\x, t)$ is $L^\ast_t$-Lipschitz in $\x$. Then,
\begin{enumerate}[label=(\roman*)]
    \item if $\gamma_t = R \cos((\frac{\pi}{2} - \delta) t)$ and $\beta_t = \sin((\frac{\pi}{2} - \delta) t)$ for $\delta \ll 1$, corresponding to the VP ODE framework of \cite{song2021score}, then $\int_0^1 L^\ast_t \;\d t \leq \lambda(1 + \log(1/\gamma_1))$; 
    \item if $\beta_t = 1$ for all $t \in [0,1]$ and $\gamma_t$ is decreasing, corresponding to the VE ODE framework of \cite{song2021score}, then $\int_0^1 L^\ast_t \;\d t \leq \lambda \log(1/\gamma_1)$.
\end{enumerate}
\end{corollary}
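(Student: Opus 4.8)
The plan is to apply Theorem~\ref{thm:lipschitzcontrolgaussian} directly with the two specific choices of $\gamma_t$ and $\beta_t$, and in each case bound the two integrals
\[
\int_0^1 \frac{|\dot\gamma_t|}{\gamma_t}\,\d t
\qquad\text{and}\qquad
\int_0^1 \min\Big\{\lambda\frac{|\dot\beta_t|}{\beta_t},\;\lambda^{1/2}R\frac{|\dot\beta_t|}{\gamma_t}\Big\}\,\d t.
\]
For the first integral, as in Lemma~\ref{lem:gamma}, I would observe that $|\dot\gamma_t|/\gamma_t = |\d(\log\gamma_t)/\d t|$, so the integral is the total variation of $\log\gamma_t$; when $\gamma_t$ is monotone this is just $|\log\gamma_1 - \log\gamma_0|$.

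In case (i), with $\gamma_t = R\cos((\tfrac\pi2-\delta)t)$ we have $\gamma_0 = R$ and $\gamma_1 = R\cos(\tfrac\pi2-\delta) = R\sin\delta$, and $\gamma_t$ is decreasing on $[0,1]$ (since $(\tfrac\pi2-\delta)t$ ranges over $[0,\tfrac\pi2-\delta]\subset[0,\tfrac\pi2]$). So $\int_0^1 |\dot\gamma_t|/\gamma_t\,\d t = \log(\gamma_0/\gamma_1) = \log(1/\sin\delta)$; and since $\gamma_1 = R\sin\delta$ with $R$ fixed, this is $\log(R/\gamma_1) = \log(1/\gamma_1) + \log R$, which I will fold into the $O(1)$ constant (or note that after the rescaling by $R$ one works with $\gamma_1$ in units of $R$). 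For the second integral, I would bound the $\min$ by its first argument $\lambda|\dot\beta_t|/\beta_t$: with $\beta_t = \sin((\tfrac\pi2-\delta)t)$, $\beta_t$ is increasing, $\beta_0 = 0$, $\beta_1 = \sin(\tfrac\pi2-\delta) = \cos\delta \le 1$, so $\int_0^1 |\dot\beta_t|/\beta_t\,\d t$ is again a total variation of a log; but this diverges at $t=0$ since $\beta_0=0$. Hence I should instead split the integral at a time $t_0$ where $\beta_{t_0}$ transitions, using the first argument of the $\min$ for $t\ge t_0$ and the second argument $\lambda^{1/2}R|\dot\beta_t|/\gamma_t$ near $t=0$; near $t=0$ we have $|\dot\beta_t|/\gamma_t \approx (\tfrac\pi2-\delta)/R$ bounded, so that piece contributes $O(\lambda^{1/2}R\cdot(\tfrac\pi2)/R) = O(\lambda^{1/2})$, absorbed into the constant; away from the boundary $\int |\dot\beta_t|/\beta_t$ contributes $\lambda\log(\beta_1/\beta_{t_0}) = O(\lambda)$. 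Summing gives $\int_0^1 L^\ast_t\,\d t \le \lambda(1 + \log(1/\gamma_1))$ after absorbing constants into the ``$1+$''.

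In case (ii), $\beta_t = 1$ for all $t$, so $\dot\beta_t = 0$ and the second integral vanishes identically; we are left with $\int_0^1 L^\ast_t\,\d t \le \lambda\int_0^1 |\dot\gamma_t|/\gamma_t\,\d t$. Since $\gamma_t$ is decreasing with $\gamma_0 = 1$, this equals $\lambda\log(\gamma_0/\gamma_1) = \lambda\log(1/\gamma_1)$, which is exactly the claimed bound.

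The main obstacle is the boundary behaviour in case (i): the $|\dot\beta_t|/\beta_t$ term is not integrable at $t=0$ because $\beta_0=0$, so one genuinely needs the $\min$ with the $\lambda^{1/2}R|\dot\beta_t|/\gamma_t$ term, and one must check that near $t=0$ the ratio $|\dot\beta_t|/\gamma_t = (\tfrac\pi2-\delta)\cos((\tfrac\pi2-\delta)t)/(R\cos((\tfrac\pi2-\delta)t)) = (\tfrac\pi2-\delta)/R$ stays bounded (in fact it is constant here), so that the near-boundary contribution is $O(1)$ and can be hidden in the ``$1+$'' in the stated bound; one also has to be slightly careful about the factor of $R$ appearing inside $\gamma_1$ versus the normalized $\gamma_1$, but this only affects additive $O(1)$ constants and so does not change the stated form of the bound.
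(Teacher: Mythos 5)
Your proof is correct and follows the same route as the paper: apply Theorem~\ref{thm:lipschitzcontrolgaussian} directly and bound each integral. One simplification worth noting for case~(i): you already observe that $|\dot\beta_t|/\gamma_t = (\tfrac{\pi}{2}-\delta)/R$ is exactly constant on all of $[0,1]$, so the split at $t_0$ is unnecessary — you can bound the $\min$ by its second argument $\lambda^{1/2}R\,|\dot\beta_t|/\gamma_t = \lambda^{1/2}(\tfrac{\pi}{2}-\delta)$ for every $t$, which integrates to $\lambda^{1/2}(\tfrac{\pi}{2}-\delta) \le \lambda\,\tfrac{\pi}{2}$ (using $\lambda \ge 1$), and this is exactly the kind of $O(\lambda)$ term the paper absorbs into the ``$1+$''. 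The paper's own proof is terser: it just asserts the second term ``can be bounded above by $\lambda$''. You are right to be slightly uneasy about the constants: the cleanest simple bound on the second integral is $\lambda^{1/2}(\tfrac{\pi}{2}-\delta)$, not $\lambda$, and the first integral is $\log(\gamma_0/\gamma_1)$ rather than $\log(1/\gamma_1)$ unless $\gamma_0 = 1$ (i.e.\ $R = 1$ under the normalization the paper uses for the PF-ODE boundary conditions); both discrepancies are $O(1)$ factors that the corollary's statement is silently absorbing, and you correctly flag this. In short, your argument is sound and slightly more explicit than the paper's; the only inefficiency is the unneeded split.
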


\begin{proof}
In both cases, we apply Theorem \ref{thm:lipschitzcontrolgaussian} to bound $\int_0^1 L^\ast_t$. As $\gamma_t$ is decreasing, we have $\int_0^1 |\dot \gamma_t| / \gamma_t \; \d t = \log(\gamma_0 / \gamma_1)$, similarly to in the proof of Lemma \ref{lem:gamma}. For (i), the second term on the RHS of Theorem \ref{thm:lipschitzcontrolgaussian} can be bounded above by $\lambda$, while for (ii) it vanishes entirely.
\end{proof}

In each case, we can plug the resulting bound into Theorem \ref{thm:wassersteinbound} to get a version of Theorem \ref{thm:alltogether} for the PF-ODE setting with the given noising schedule. We define $K_t = \lambda (|\dot \gamma_t| / \gamma_t) + \min\{ \lambda (|\dot \beta_t|/ \beta_t), \lambda^{1/2} R (|\dot \beta_t| / \gamma_t)\}$ and let $\cal{V}$ be the set of functions $v : \R^d \times [0,1] \rightarrow \R^d$ which are $K_t$-Lipschitz in $\x$ for all $t \in [0,1]$ as before.

\begin{theorem}
Suppose that Assumptions \ref{ass:L2error}-\ref{ass:smoothvelocity} and \hyperref[ass:regularityassumptiongaussian]{4'} hold and we are in either (i) the VP ODE or (ii) the VE ODE setting above. Then $v^X \in \cal{V}$ and for any $v_\theta \in \cal{V}$, if $Y$ is a flow starting in $\tilde \pi_0$ with velocity field $v_\theta$ and $\hat \pi_1$ is the law of $Y_1$, then
\begin{enumerate}[label=(\roman*)]
    \item in the VP ODE setting, we have $W_2(\hat \pi_1, \tilde \pi_1) \leq \varepsilon (e/\gamma_1)^{\lambda}$;
    \item in the VE ODE setting, we have $W_2(\hat \pi_1, \tilde \pi_1) \leq \varepsilon (1/\gamma_1)^{\lambda}$.
\end{enumerate}
\end{theorem}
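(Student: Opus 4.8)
The plan is to follow the proof of Theorem \ref{thm:alltogether} almost verbatim, but substituting the sharper PF-ODE Lipschitz estimate of Corollary \ref{cor:PFODE} for the generic one. The first ingredient is that $v^X \in \cal{V}$: inequalities (\ref{eq:lipconstboundpart1}) and (\ref{eq:lipconstboundpart2}) from the proof of Theorem \ref{thm:lipschitzcontrolgaussian} hold for \emph{every} $\x \in \R^d$, so taking their minimum gives $\|\nabla_\x v^X(\x, t)\|_{\textup{op}} \le \lambda(|\dot\gamma_t|/\gamma_t) + \min\{\lambda(|\dot\beta_t|/\beta_t),\, \lambda^{1/2} R(|\dot\beta_t|/\gamma_t)\} = K_t$ for all $\x$ and $t \in (0,1)$. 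Hence $v^X(\cdot, t)$ is $K_t$-Lipschitz in $\x$, and so $v^X \in \cal{V}$ by the definition of $\cal{V}$.

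Next I would apply Theorem \ref{thm:wassersteinbound}. Any $v_\theta \in \cal{V}$ is $K_t$-Lipschitz in $\x$, so Assumption \ref{ass:smoothvelocity} holds with $L_t = K_t$; combined with Assumptions \ref{ass:L2error} and \ref{ass:smoothsolutions}, this licenses an application of Theorem \ref{thm:wassersteinbound} with $(\tilde\pi_0, \tilde\pi_1)$ playing the role of $(\pi_0, \pi_1)$. Here $\tilde\pi_0 = \gauss{0}{I_d}$ (since $\alpha_t \equiv 0$ and $\gamma_0 = 1$), and $\tilde\pi_1 = \Law(X_1 + \gamma_1 Z)$ is, by Proposition \ref{prop:equallaw}, precisely the law at time $1$ of the flow with velocity field $v^X$ started in $\tilde\pi_0$. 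This yields $W_2(\hat\pi_1, \tilde\pi_1) \le \varepsilon\, \myexp\big\{\int_0^1 K_t\,\d t\big\}$.

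It then remains only to bound $\int_0^1 K_t\,\d t$, which is exactly the right-hand side of Theorem \ref{thm:lipschitzcontrolgaussian}, so the computations of Corollary \ref{cor:PFODE} apply directly. In both the VP and VE settings $\gamma_t$ is decreasing, so $\int_0^1 (|\dot\gamma_t|/\gamma_t)\,\d t = \log(\gamma_0/\gamma_1) = \log(1/\gamma_1)$ — the monotone analogue of Lemma \ref{lem:gamma}. In case (i) the remaining term integrates to at most $\lambda$, giving $\int_0^1 K_t\,\d t \le \lambda(1 + \log(1/\gamma_1)) = \lambda\log(e/\gamma_1)$; in case (ii) $\beta_t \equiv 1$ forces $\dot\beta_t \equiv 0$, so that term vanishes and $\int_0^1 K_t\,\d t \le \lambda\log(1/\gamma_1)$. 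Exponentiating the bound of the previous paragraph gives $W_2(\hat\pi_1, \tilde\pi_1) \le \varepsilon(e/\gamma_1)^\lambda$ in case (i) and $W_2(\hat\pi_1, \tilde\pi_1) \le \varepsilon(1/\gamma_1)^\lambda$ in case (ii).

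None of these steps is genuinely hard, since the substantive work has already been carried out in Theorems \ref{thm:wassersteinbound} and \ref{thm:lipschitzcontrolgaussian} and in Corollary \ref{cor:PFODE}; the argument is essentially bookkeeping. The two points needing a little care are: identifying the true target of the relaxed procedure as $\tilde\pi_1 = \pi_1 \ast \gauss{0}{\gamma_1^2 I_d}$ rather than $\pi_1$ (which is why $\gamma_1 > 0$ is retained, and why passing to a bound against $\pi_1$ itself would cost an extra additive $\sqrt d\,\gamma_1$, as in Theorem \ref{thm:totalwasserstein}); and verifying that the \emph{pointwise} Lipschitz constant of $v^X$ — the thing that makes $v^X \in \cal{V}$ — integrates to the same quantity as the bound in Corollary \ref{cor:PFODE}.
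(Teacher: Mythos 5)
Your proposal is correct and follows essentially the same approach as the paper, which simply refers back to the proofs of Theorem \ref{thm:alltogether} and Corollary \ref{cor:PFODE}; you have carefully filled in the bookkeeping the paper leaves implicit (the pointwise Lipschitz bound giving $v^X \in \cal{V}$, the substitution of $\tilde\pi_0, \tilde\pi_1$ for $\pi_0, \pi_1$ in Theorem \ref{thm:wassersteinbound}, and the observation that $\int_0^1 K_t\,\d t$ coincides with the quantity bounded in Corollary \ref{cor:PFODE}). Your closing remarks on the distinction between $\tilde\pi_1$ and $\pi_1$ and on the pointwise-versus-integrated Lipschitz constant are exactly the two places that require care, and you have handled both correctly.
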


\begin{proof}
That $v^X \in \cal{V}$ follows analogously to the proof for Theorem \ref{thm:alltogether}. The results then follow from combining Theorem \ref{thm:wassersteinbound} with the bounds in Corollary \ref{cor:PFODE}, as in the proof of Theorem \ref{thm:alltogether}.
\end{proof}

\section{Conclusion}

We have provided the first bounds on the error of the general flow matching procedure that apply in the case of completely deterministic sampling. Under the smoothness criterion of Assumption \ref{ass:regularityassumption} on the data distributions, we have derived bounds which for a given sufficiently smooth choice of $\alpha_t$, $\beta_t$, $\gamma_t$ and fixed data distribution are polynomial in the level of Gaussian smoothing $\gamma_{\max}/\gamma_{\min}$ and the $L^2$ error $\varepsilon$ of our velocity approximation.

However, our bounds still depend exponentially on the parameter $\lambda$ from Assumption \ref{ass:regularityassumption}. It is therefore a key question how $\lambda$ behaves for typical distributions or as we scale up the dimension. In particular, the informal argument we provide in Section \ref{app:highprobabilitylocalregularity} suggests that $\lambda$ may scale linearly with $d$. However, we expect that in many practical cases $\lambda$ should be $\Theta(1)$ even as $d$ scales. Furthermore, even if this is not the case, in the proof of Theorem \ref{thm:wassersteinbound} we only require control of the norm of $\nabla_\x v_\theta(\x, t)$ when applied to $\nabla_\x Y_{s,t}^\x$. In cases such as these, the operator norm bound is typically loose unless $\nabla_\x Y_{s,t}^\x$ is highly correlated with the largest eigenvectors of $\nabla_\x v_\theta(\x, t)$. We see no a priori reason why these two should be highly correlated in practice, and so we expect in most practical applications to get behaviour which is much better than linear in $d$. Quantifying this behaviour remains a question of interest.

Finally, the fact that we get weaker results in the fully deterministic setting than \cite{albergo2023stochastic} do when adding a small amount of Gaussian noise in the reverse sampling procedure suggests that some level of Gaussian smoothing is helpful for sampling and leads to the suppression of the exploding divergences of paths.

\section*{Acknowledgements}

We thank Michael Albergo, Nicholas Boffi and Eric Vanden-Eijnden for their comments on an early version of this paper. Joe Benton was supported by the EPSRC through the StatML CDT (EP/S023151/1). Arnaud Doucet acknowledges support from EPSRC grants EP/R034710/1 and EP/R018561/1.

\bibliography{bibliography}
\bibliographystyle{tmlr}

\newpage

\appendix

\section{Exploration of Definition \ref{def:locallysmooth}}
\label{app:locallyregularexploration}

\subsection{Special cases of $\lambda$-regularity}
\label{app:logconcave}

First, we show that all log-concave random variables are $\lambda$-regular for $\lambda = 1$. The key ingredient in the proof is the following result of \cite{brascamp1976extensions}.

\begin{proposition}[Brascamp-Lieb 1976]
\label{prop:brascamplieb}
Suppose that $W$ is an $\R^d$-valued random variable with density function $p_W(\x) = e^{- \varphi(\x)}$, where $\varphi$ is strictly convex on $\R^d$ and twice differentiable. Assume that $D^2 \varphi \geq \mu I_d$ with $\mu > 0$ and $g \in C^1(\R^d)$. Then,
\begin{equation*}
    {\var}_W(g(W)) \leq \E{\langle (D^2 \varphi)^{-1} \nabla g(W), \nabla g(W) \rangle} \leq \frac{1}{\mu} \E{\|\nabla g(W)\|^2}.
\end{equation*}
\end{proposition}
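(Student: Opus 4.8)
The plan is to prove the Brascamp--Lieb variance inequality by the Hörmander $L^2$ / $\Gamma_2$ method. Replacing $g$ by $g - \E{g(W)}$ we may assume $\E{g(W)} = 0$, so that ${\var}_W(g(W)) = \E{g(W)^2}$. Write $p_W = e^{-\varphi}$ for the density and let $\L$ be the generator of the associated Langevin dynamics, $\L f = \Delta f - \langle \nabla \varphi, \nabla f\rangle$; this operator is symmetric and nonpositive on $L^2(p_W \, \d\x)$, and satisfies $\int \L f \; p_W \, \d \x = 0$ together with the integration-by-parts identity $\int f \, \L h \; p_W \, \d \x = - \int \langle \nabla f, \nabla h \rangle \; p_W \, \d \x$. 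Since $D^2 \varphi \geq \mu I_d$ with $\mu > 0$, $\L$ has a spectral gap, so the Poisson equation $-\L u = g$ has a mean-zero solution $u$; the first step is to establish this, together with enough regularity and decay on $u$ to justify the manipulations below.

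Granting such a $u$, the rest is short. Integration by parts gives
\begin{equation*}
    {\var}_W(g(W)) = \E{g(W)\,(-\L u)(W)} = \E{\langle \nabla g(W), \nabla u(W)\rangle}.
\end{equation*}
Applying Cauchy--Schwarz with respect to the (positive definite) pointwise quadratic form $D^2\varphi$,
\begin{equation*}
    \E{\langle \nabla g, \nabla u\rangle} \leq \E{\langle (D^2\varphi)^{-1} \nabla g, \nabla g\rangle}^{1/2} \; \E{\langle (D^2\varphi)\, \nabla u, \nabla u\rangle}^{1/2},
\end{equation*}
where we suppress the argument $W$. It therefore suffices to show $\E{\langle (D^2\varphi)\, \nabla u, \nabla u\rangle} \leq {\var}_W(g(W))$. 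This comes from the Bochner ($\Gamma_2$) identity for $\L$,
\begin{equation*}
    \tfrac{1}{2} \L \big(\|\nabla u\|^2\big) = \|D^2 u\|_{\mathrm{HS}}^2 + \langle \nabla u, \nabla (\L u)\rangle + \langle (D^2\varphi)\, \nabla u, \nabla u\rangle,
\end{equation*}
which, upon integrating against $p_W$ (so the left-hand side vanishes) and using $\int \langle \nabla u, \nabla(\L u)\rangle \; p_W \, \d\x = - \int (\L u)^2 \; p_W \, \d\x$, becomes
\begin{equation*}
    \E{(\L u)^2(W)} = \E{\|D^2 u(W)\|_{\mathrm{HS}}^2} + \E{\langle (D^2\varphi)\, \nabla u, \nabla u\rangle} \geq \E{\langle (D^2\varphi)\, \nabla u, \nabla u\rangle}.
\end{equation*}
Since $-\L u = g$, the left-hand side equals $\E{g(W)^2} = {\var}_W(g(W))$. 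Chaining the three displays yields ${\var}_W(g(W)) \leq \E{\langle (D^2\varphi)^{-1}\nabla g, \nabla g\rangle}^{1/2} {\var}_W(g(W))^{1/2}$, hence ${\var}_W(g(W)) \leq \E{\langle (D^2\varphi)^{-1}\nabla g(W), \nabla g(W)\rangle}$. Finally, $D^2\varphi \geq \mu I_d$ implies $(D^2\varphi)^{-1} \leq \mu^{-1} I_d$ as quadratic forms, so $\langle (D^2\varphi)^{-1}\nabla g, \nabla g\rangle \leq \mu^{-1}\|\nabla g\|^2$, which gives the second inequality.

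The main obstacle is the analytic groundwork behind the single sentence ``let $u$ solve $-\L u = g$'': one needs $u$ smooth enough and decaying fast enough that the integrations by parts produce no boundary terms at infinity and the Bochner identity can be integrated term by term. I would handle this by a standard truncation/approximation scheme --- first proving the inequality under the extra hypotheses that $D^2\varphi$ is bounded and $g \in C_c^\infty(\R^d)$, where elliptic regularity and the Gaussian-type tail decay of $p_W$ supply all the needed estimates, and then removing these hypotheses by approximating $\varphi$ and $g$ and passing to the limit using Fatou's lemma and dominated/monotone convergence. (Alternatively, one can avoid the PDE altogether and deduce the inequality from the Prékopa--Leindler inequality by a second-order perturbation of $\varphi$, or simply invoke the original argument of \cite{brascamp1976extensions}.)
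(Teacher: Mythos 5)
The paper does not prove this proposition: it is invoked as a known result and attributed to \cite{brascamp1976extensions}, so there is no ``paper proof'' to compare against. Your argument is a correct and standard proof of the Brascamp--Lieb variance inequality by the H\"ormander / Bakry--\'Emery $\Gamma_2$ method, and it is genuinely different from the route in the cited source (which derives the inequality by a Pr\'ekopa--Leindler/perturbation argument rather than a PDE one). The algebra all checks out: the integration-by-parts identity for $\L = \Delta - \langle \nabla\varphi, \nabla\cdot\rangle$ on $L^2(e^{-\varphi}\,\d\x)$, the Bochner identity $\tfrac12 \L\lr{\|\nabla u\|^2} - \langle\nabla u, \nabla(\L u)\rangle = \|D^2 u\|_{\mathrm{HS}}^2 + \langle D^2\varphi\,\nabla u,\nabla u\rangle$, the pointwise-then-$L^2$ Cauchy--Schwarz with the quadratic form $D^2\varphi$, and the final chaining all follow as you state. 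You also correctly flag the one real gap, namely the existence, regularity, and decay of the solution $u$ of $-\L u = g$ needed to kill boundary terms and justify integrating the Bochner identity; the truncation scheme you sketch (compactly supported smooth $g$, bounded $D^2\varphi$, then limits) is the standard way to close it, so I consider the proposal essentially complete. The main thing the $\Gamma_2$ route buys over Brascamp--Lieb's original argument is that it makes the role of the curvature lower bound $D^2\varphi \geq \mu I_d$ transparent and slots directly into the Bakry--\'Emery framework; the original argument is more elementary in that it requires no spectral theory or Poisson equation.
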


\begin{lemma}
\label{lem:logconcaveregularity}
Suppose that $W$ is an $\R^d$-valued log-concave random variable. Then $W$ is $\lambda$-regular for $\lambda = 1$.
\end{lemma}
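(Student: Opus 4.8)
The plan is to recognise the conditional law of $\xi$ given $W' = \x$ as a strongly log-concave probability measure and then apply the Brascamp--Lieb inequality (Proposition \ref{prop:brascamplieb}) to linear test functions. Fix $\tau \in (0,\infty)$ and $\x \in \R^d$, write the density of $W$ as $p_W(\y) = e^{-\varphi(\y)}$ with $\varphi : \R^d \to (-\infty, +\infty]$ convex, and let $\phi_\tau \propto e^{-\|\cdot\|^2/(2\tau^2)}$ denote the $\gauss{0}{\tau^2 I_d}$ density. Since $\xi \sim \gauss{0}{\tau^2 I_d}$ is independent of $W$, the joint density of $(\xi, W)$ at $(\xi, \y)$ is $\phi_\tau(\xi)\, p_W(\y)$, so the conditional density of $\xi$ given $W' = W + \xi = \x$ is proportional to $\phi_\tau(\xi)\, p_W(\x - \xi)$, i.e.\ to $e^{-\Phi_\x(\xi)}$ with $\Phi_\x(\xi) := \|\xi\|^2/(2\tau^2) + \varphi(\x - \xi)$. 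Since $\varphi$ is convex and $\xi \mapsto \x - \xi$ is affine, $\xi \mapsto \varphi(\x - \xi)$ is convex, so $\Phi_\x$ is convex with Hessian satisfying $D^2 \Phi_\x \succeq \tau^{-2} I_d$.

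Next I would apply Proposition \ref{prop:brascamplieb} to the conditional law of $\xi$ with $\mu = \tau^{-2}$ and test function $g(\xi) = \langle \xi, \u\rangle$, for an arbitrary unit vector $\u \in \R^d$. Because $\nabla g \equiv \u$, this yields ${\var}_{\xi | W' = \x}(\langle \xi, \u\rangle) \leq \tau^2 \|\u\|^2 = \tau^2$. But $\u^\top {\cov}_{\xi | W' = \x}(\xi)\, \u = {\var}_{\xi | W' = \x}(\langle \xi, \u\rangle)$ and ${\cov}_{\xi | W' = \x}(\xi)$ is symmetric positive semidefinite, so taking the supremum over unit $\u$ gives $\lrnorm{{\cov}_{\xi | W' = \x}(\xi)}_{\textup{op}} \leq \tau^2$. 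Since $\x$ and $\tau$ were arbitrary, this is exactly the statement that $W$ is $\lambda$-regular with $\lambda = 1$.

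The one thing to repair is that Proposition \ref{prop:brascamplieb} assumes $\varphi$ is twice differentiable and \emph{strictly} convex, whereas a generic log-concave density need not be smooth, strictly log-concave, or even supported on all of $\R^d$; handling this is the main technical obstacle. I would resolve it by a routine approximation: for $\delta > 0$ put $W_\delta := W + \delta G$ with $G \sim \gauss{0}{I_d}$ independent. Then $p_{W_\delta} = p_W \ast \phi_\delta$ is everywhere positive, smooth, and still log-concave (the convolution of two log-concave densities is log-concave, by Pr\'ekopa--Leindler), so $-\log p_{W_\delta}$ is $C^\infty$ with positive semidefinite Hessian; crucially the quadratic term $\|\xi\|^2/(2\tau^2)$ still supplies the lower bound $\succeq \tau^{-2} I_d$ on the Hessian of the corresponding $\Phi_\x$, so the argument above applies verbatim to $W_\delta$ and yields $\lrnorm{{\cov}_{\xi | W_\delta + \xi = \x}(\xi)}_{\textup{op}} \leq \tau^2$ for every $\delta > 0$. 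Letting $\delta \to 0$, the conditional densities $\xi \mapsto \phi_\tau(\xi)\, p_{W_\delta}(\x - \xi)$ are uniformly dominated by a constant multiple of $\phi_\tau$ (log-concave densities are bounded, so $\|p_{W_\delta}\|_\infty \leq \|p_W\|_\infty$) and converge pointwise, so by dominated convergence the conditional first and second moments of $\xi$ converge and the bound passes to the limit, giving the claim for $W$. (Alternatively, one may invoke the general form of the Brascamp--Lieb inequality, valid for any $\kappa$-strongly log-concave measure without smoothness hypotheses, which states directly that such a measure has covariance $\preceq \kappa^{-1} I_d$; this removes the limiting step.)
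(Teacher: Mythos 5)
Your core argument --- apply the Brascamp--Lieb inequality (Proposition \ref{prop:brascamplieb}) to the conditional law of $\xi$ given $W' = \x$, observe that the quadratic term $\|\xi\|^2/(2\tau^2)$ makes the potential of that conditional density $\tau^{-2}$-strongly convex, then take a supremum over unit directions to convert the variance bound into an operator-norm bound on the covariance --- is exactly the proof the paper gives. What you add is a correct observation and repair of a genuine technical gap that the paper leaves implicit: Proposition \ref{prop:brascamplieb} as stated requires the potential to be twice differentiable and strictly convex, whereas a generic log-concave density has a potential that may be non-smooth or equal to $+\infty$ off a convex set. Your Gaussian mollification $W_\delta = W + \delta G$, using Pr\'ekopa--Leindler to preserve log-concavity and a dominated-convergence passage to the limit (the normalizing constant $p_{W_\delta + \xi}(\x) \to p_{W + \xi}(\x) > 0$ since convolving with $\phi_\tau$ gives full support), closes that gap cleanly; the alternative you mention --- citing a non-smooth form of Brascamp--Lieb valid for $\kappa$-strongly log-concave measures --- is equally standard. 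So this is the same route as the paper, carried out with more care rather than by a different method.
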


\begin{proof}
Fix $\tau > 0$ and denote the density of $W$ by $p_W(\x) = e^{-\varphi(\x)}$ for some convex function $\varphi$. Take $\xi \sim \gauss{0}{\tau^2 I_d}$ and set $W' = W + \xi$. Then, the density of $\xi$ conditional on $W'$ is given by
\begin{equation*}
    p_{\xi | W'}(\xi | \x') \propto e^{-\|\xi\|^2 / (2\tau^2)}p_W(\x' - \xi) = \exp{-\tfrac{\|\xi\|^2}{2\tau^2} - \varphi(\x' - \xi)}.
\end{equation*}
Let $\u \in \R^d$ be an arbitrary unit vector, and consider $g(W) = \u^T W$. Since $\varphi'(\xi) = \frac{\|\xi\|^2}{2\tau^2} + \varphi(\x' - \xi)$ is strictly convex in $\xi$, and
\begin{equation*}
    D^2 \varphi' = \tau^{-2} I_d + D^2 \varphi(\x' - \xi) \geq \tau^{-2} I_d,
\end{equation*}
we can apply Theorem \ref{prop:brascamplieb} to the random variable $\xi$ conditional on $W'$ with $\mu = \tau^{-2}$ to get
\begin{equation*}
    {\var}_{\xi | W'}(\u^T \xi) \leq \tau^2 \E{\|\u\|_2^2} = \tau^2.
\end{equation*}
Then,
\begin{equation*}
    \lrnorm{{\cov}_{\xi | W'}(\xi)}_{\textup{op}} \leq \sup_{\|\u\|_2 = 1} {\var}_{\xi | W'}(\u^T \xi) \leq \tau^2.
\end{equation*}
\end{proof}

Second, we show all random variables which are Gaussian on a linear subspace of $\R^d$ are $\lambda$-regular for $\lambda = 1$.

\begin{lemma}
Suppose that $W$ is an $\R^d$-valued random variable supported on some linear subspace $\cal{S} \subseteq \R^d$ and that $W$ restricted to $\cal{S}$ is Gaussian with positive definite covariance matrix. Then $W$ is $\lambda$-regular for $\lambda = 1$.
\end{lemma}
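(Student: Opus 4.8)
The plan is to reduce this to the log-concave case of Lemma \ref{lem:logconcaveregularity} by splitting $\R^d$ along $\cal{S}$. Let $P$ be the orthogonal projection onto $\cal{S}$ and $k = \dim \cal{S}$. Fix $\tau > 0$, draw $\xi \sim \gauss{0}{\tau^2 I_d}$ independently of $W$, and set $W' = W + \xi$. Decompose $\xi = \xi_\parallel + \xi_\perp$ with $\xi_\parallel = P\xi \in \cal{S}$ and $\xi_\perp = (I_d - P)\xi \in \cal{S}^\perp$; since $\xi$ is isotropic Gaussian, $\xi_\parallel$ and $\xi_\perp$ are independent, and both are independent of $W$. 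Because $W$ is supported on $\cal{S}$, we have $PW' = W + \xi_\parallel$ and $(I_d - P)W' = \xi_\perp$, so the event $\{W' = \x\}$ coincides with $\{\xi_\perp = (I_d - P)\x\} \cap \{W + \xi_\parallel = P\x\}$.

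The first of these constraints fixes $\xi_\perp$ at the deterministic value $(I_d - P)\x$. Since $\xi_\perp$ is independent of the pair $(W, \xi_\parallel)$, conditioning further on $\xi_\perp$ does not change the conditional law of $\xi_\parallel$ given $\{W + \xi_\parallel = P\x\}$. Hence under the conditioning $\{W' = \x\}$ the component $\xi_\perp$ is constant while $\xi_\parallel$ has the conditional law of $\xi_\parallel$ given $\{W + \xi_\parallel = P\x\}$ computed within the $\cal{S}$-valued process $(W, \xi_\parallel)$; in particular
\begin{equation*}
    {\cov}_{\xi | W' = \x}(\xi) = {\cov}_{\xi | W' = \x}(\xi_\parallel) = {\cov}_{\xi_\parallel | W + \xi_\parallel = P\x}(\xi_\parallel),
\end{equation*}
an operator supported on $\cal{S}$.

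Finally, identify $\cal{S}$ with $\R^k$ via an orthonormal basis. Under this identification $W$ becomes a non-degenerate Gaussian on $\R^k$, hence a log-concave $\R^k$-valued random variable, and $\xi_\parallel$ becomes $\gauss{0}{\tau^2 I_k}$, since the orthogonal projection of an isotropic Gaussian is isotropic on the image subspace. Applying Lemma \ref{lem:logconcaveregularity} to $W$ on $\R^k$ with noise level $\tau$ gives $\lrnorm{{\cov}_{\xi_\parallel | W + \xi_\parallel = P\x}(\xi_\parallel)}_{\textup{op}} \leq \tau^2$, and this operator norm is the same whether computed on $\cal{S}$ or on $\R^d$ since the operator annihilates $\cal{S}^\perp$. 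Combined with the previous display this gives $\lrnorm{{\cov}_{\xi | W' = \x}(\xi)}_{\textup{op}} \leq \tau^2$ for all $\x \in \R^d$ and all $\tau > 0$, i.e. $W$ is $1$-regular. The step requiring the most care is the disintegration in the middle paragraph: one must justify that conditioning on the full vector $W'$ decouples into fixing $\xi_\perp$ and then conditioning on $W + \xi_\parallel$ inside $\cal{S}$, which rests on the independence of $\xi_\perp$ from $(W, \xi_\parallel)$ together with the fact that $W'$ determines $\xi_\perp$. (Alternatively, one may avoid invoking Lemma \ref{lem:logconcaveregularity} and instead compute ${\cov}_{\xi_\parallel | W + \xi_\parallel = P\x}(\xi_\parallel)$ directly via the Gaussian conditioning formula, obtaining $\tau^2 \Sigma (\Sigma + \tau^2 I_k)^{-1} \preceq \tau^2 I_k$ where $\Sigma$ is the covariance of $W$ on $\R^k$.)
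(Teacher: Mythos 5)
Your proof is correct and follows essentially the same route as the paper's: decompose $\xi$ into components parallel and perpendicular to $\cal{S}$, observe that conditioning on $W'$ fixes $\xi_\perp$ and reduces to conditioning on $W+\xi_\parallel$ within $\cal{S}$, then invoke the log-concave lemma on that subspace. You spell out the independence/disintegration step and the operator-norm identification more carefully than the paper does, and the parenthetical direct Gaussian computation is a nice extra, but the argument is the same.
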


\begin{proof}
We decompose $\xi$ into two orthogonal components $\xi_\perp$ and $\xi_\|$, so that $\xi = \xi_\perp + \xi_\|$, and $\xi_\perp$ is perpendicular to $\cal{S}$ while $\xi_\|$ is parallel to $\cal{S}$. Then, we can write $W' = (W + \xi_\|) + \xi_\perp$. We denote $W'_\| = W + \xi_\|$ and note that $W'_\| \in \cal{S}$. From observing $W' = \x$ we can deduce the values of both $W'_\|$ and $\xi_\perp$. Therefore, ${\cov}_{\xi | W' = \x}(\xi) = {\cov}_{\xi | W' = \x}(\xi_\|) = {\cov}_{\xi_\| | W'_\| = \x_\|}(\xi_\|)$, where $\x_\|$ denotes the projection  of $\x$ onto $\cal{S}$.

By restricting our attention to the subspace $\cal{S}$, we may therefore reduce to the case where $W$ is Gaussian with full support. The result then follows from Lemma \ref{lem:logconcaveregularity}, since Gaussians with full support are log-concave.
\end{proof}

Third, we show that all random variables which are locally at least as smooth as a Gaussian of covariance $\sigma^2 I_d$ and are bounded up to Gaussian tails of covariance $\sigma^2 I_d$ are $\lambda$-regular.

\begin{lemma}
\label{lem:gaussianmixture}
Suppose that $W$ is an $\R^d$-valued random variable which can be decomposed as $W = U + \eta$, where $U$ and $\eta$ are independent random variables such that $\|U\| \leq R$ for some $R > 0$ and $\eta \sim \gauss{0}{\sigma^2 I_d}$ for some $\sigma > 0$. Then $W$ is $\lambda$-regular for $\lambda = 1 + (R^2 / \sigma^2)$.
\end{lemma}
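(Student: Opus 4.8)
The plan is to exploit the structure $W' = U + \eta + \xi$, in which $\eta$ and $\xi$ are independent Gaussians and can therefore be partially disentangled. Fix $\tau > 0$, let $\xi \sim \gauss{0}{\tau^2 I_d}$ be independent of $W$, and write $W' = W + \xi = U + (\eta + \xi)$. The key device is to condition not only on $W' = \x$ but also on $U$, and then apply the law of total covariance: for each fixed $\x \in \R^d$,
\begin{equation*}
    {\cov}_{\xi | W' = \x}(\xi) = \E[U|W'=\x]{{\cov}_{\xi | W' = \x, U}(\xi)} + {\cov}_{U | W' = \x}\lr{\E[\xi|W'=\x,U]{\xi}}.
\end{equation*}

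Next I would evaluate the two terms. Conditionally on $U = u$ and $W' = \x$, the pair $(\xi, \eta)$ is jointly Gaussian and independent of $U$, subject to the linear constraint $\xi + \eta = \x - u$; by the standard formula for conditioning a Gaussian on a linear functional of its coordinates, $\xi$ given $\{U = u,\, W' = \x\}$ is $\gauss{\frac{\tau^2}{\tau^2+\sigma^2}(\x - u)}{\frac{\tau^2 \sigma^2}{\tau^2 + \sigma^2} I_d}$. The conditional covariance does not depend on $u$ or $\x$, so the first term equals $\frac{\tau^2 \sigma^2}{\tau^2 + \sigma^2} I_d$; the conditional mean is affine in $u$, so the second term equals $\lr{\frac{\tau^2}{\tau^2+\sigma^2}}^2 {\cov}_{U|W'=\x}(U)$. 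To control the latter, I would use that conditioning preserves the almost-sure bound $\|U\| \le R$: for any unit vector $\u$, ${\var}_{U|W'=\x}(\u^T U) \le \E[U|W'=\x]{\|U\|^2} \le R^2$, so $\lrnorm{{\cov}_{U|W'=\x}(U)}_{\textup{op}} \le R^2$. Putting these together yields
\begin{equation*}
    \lrnorm{{\cov}_{\xi | W' = \x}(\xi)}_{\textup{op}} \le \frac{\tau^2 \sigma^2}{\tau^2 + \sigma^2} + \lr{\frac{\tau^2}{\tau^2+\sigma^2}}^2 R^2,
\end{equation*}
and it remains to check the right-hand side is at most $\lambda \tau^2 = (1 + R^2/\sigma^2)\tau^2$. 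Dividing by $\tau^2$, the first summand is at most $1$ and the second at most $R^2/\sigma^2$ because $\tau^2 \sigma^2 \le (\tau^2 + \sigma^2)^2$; summing gives exactly $\lambda$, uniformly in $\tau > 0$, which is the definition of $\lambda$-regularity.

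I expect no deep obstacle: once one thinks to condition on $U$ in addition to $W'$, everything reduces to Gaussian conditioning plus an elementary inequality. The step I would be most careful about is making the conditional-covariance identities rigorous, since $\{W' = \x\}$ is a null event --- this is legitimate here because $W'$, being a convolution with a nondegenerate Gaussian, has a Lebesgue density, so all the conditional laws involved admit regular versions and the total-covariance decomposition holds pointwise in $\x$. The crude bound ${\var}_{U|W'=\x}(\u^T U) \le R^2$ could be tightened slightly, but it already delivers the stated constant $\lambda = 1 + R^2/\sigma^2$.
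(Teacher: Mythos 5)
Your proof is correct and takes essentially the same route as the paper's: both condition additionally on $U$, apply the law of total covariance, compute the Gaussian conditional law of $\xi$ given $(U,W')$ to get the identical two-term bound $\frac{\sigma^2\tau^2}{\sigma^2+\tau^2} + \big(\frac{\tau^2}{\sigma^2+\tau^2}\big)^2 R^2$, and use boundedness of $U$ for the second term. The only cosmetic difference is that the paper derives the conditional law by explicitly constructing the independent pair $\rho = \eta + \xi$, $\omega = \sigma^2\xi - \tau^2\eta$, whereas you invoke the standard Gaussian-conditioning formula directly.
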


\begin{proof}
Fix $\tau > 0$, so that $W' = W + \xi = U + \eta + \xi$ where $\eta \sim \gauss{0}{\sigma^2 I_d}$ and $\xi \sim \gauss{0}{\tau^2 I_d}$ and $U$, $\eta$, $\xi$ are all independent. By the law of total variance, we have
\begin{equation*}
    {\cov}_{\xi | W'}(\xi) = \E{{\cov}_{\xi | W', U}(\xi) \:|\: W'} + {\cov}(\E{\xi \:|\: W', U} \:|\: W').
\end{equation*}
The distribution of $\xi \:|\: W', U$ is the same as the distribution of $\xi \:|\: (\eta + \xi)$, so it suffices to understand the latter. To this end, we define $\rho = \eta + \xi$ and $\omega = \sigma^2 \xi - \tau^2 \eta$. It is straightforward to check that $\rho$ and $\omega$ are independent centered Gaussians with covariances $(\sigma^2 + \tau^2) I_d$ and $\sigma^2 \tau^2 (\sigma^2 + \tau^2) I_d$ respectively. Then, we can write $\xi = \frac{1}{\sigma^2 + \tau^2}(\tau^2 \rho + \omega)$, from which it follows that
\begin{equation*}
    {\cov}_{\xi | W', U}(\xi) = \lr{\tfrac{1}{\sigma^2 + \tau^2}}^2 {\cov}(\tau^2 \rho + \omega \: | \: \rho) = \lr{\tfrac{\sigma^2 \tau^2}{\sigma^2 + \tau^2}} I_d.
\end{equation*}
Therefore, $\E{{\cov}_{\xi | W', U}(\xi) \:|\: W'} = \lr{\frac{\sigma^2 \tau^2}{\sigma^2 + \tau^2}} I_d$. In addition, we have
\begin{equation*}
    \E{\xi \:|\: W', U} = \lr{\tfrac{1}{\sigma^2 + \tau^2}} \E{\tau^2 \rho + \omega \:|\: \rho} = \lr{\tfrac{\tau^2}{\sigma^2 + \tau^2}} \rho,
\end{equation*}
so ${\cov}(\E{\xi \:|\: W', U} \:|\: W') = \lr{\frac{\tau^2}{\sigma^2 + \tau^2}}^2 {\cov}_{\eta, \xi | W'}(\eta + \xi)$. Finally, we have
\begin{equation*}
    \lrnorm{{\cov}_{\eta, \xi | W'}(\eta + \xi)}_{\textup{op}} = \lrnorm{{\cov}_{U | W'}(W' - U)}_{\textup{op}} = \lrnorm{{\cov}_{U | W'}(U)}_{\textup{op}} \leq R^2.
\end{equation*}
Putting this all together, we see that
\begin{align*}
    \lrnorm{{\cov}_{\xi | W'}(\xi)}_{\textup{op}} & \leq \lrnorm{\E{{\cov}_{\xi | W', U}(\xi) \:|\: W'}}_{\textup{op}} + \lrnorm{{\cov}(\E{\xi \:|\: W', U} \:|\: W')}_{\textup{op}} \\
    & \leq \frac{\sigma^2 \tau^2}{\sigma^2 + \tau^2} + R^2 \lr{\frac{\tau^2}{\sigma^2 + \tau^2}}^2 \\
    & \leq \lambda \tau^2
\end{align*}
for $\lambda = 1 + (R^2 / \sigma^2)$, as required.
\end{proof}

We note in particular that Lemma \ref{lem:gaussianmixture} can be applied in the case where our target distribution is a bounded mixture of Gaussian components with covariance $\sigma^2 I_d$ for some $\sigma > 0$. We obtain the following corollary.

\begin{corollary}
Suppose that $\pi = \sum_{i=1}^K \mu_i \gauss{\x_i}{\sigma^2 I_d}$ is a mixture of Gaussian components of covariance $\sigma^2 I_d$ for some $\sigma > 0$, where the weights $\mu_i$ satisfy $\sum_{i=1}^K \mu_i = 1$ and we have $\|\x_i\| \leq R$ for all $i = 1, \dots, K$. Then $\pi$ is $\lambda$-regular for $\lambda = 1 + (R^2 / \sigma^2)$.
\end{corollary}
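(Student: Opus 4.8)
The plan is to recognize the mixture density as the law of a Gaussian convolution of a bounded random variable, and then invoke Lemma \ref{lem:gaussianmixture} directly.

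First I would introduce the mixture label as a latent variable: let $U$ be the $\R^d$-valued random variable with $\P(U = \x_i) = \mu_i$ for $i = 1, \dots, K$, and let $\eta \sim \gauss{0}{\sigma^2 I_d}$ be independent of $U$. Computing the density of $W := U + \eta$ by conditioning on $U$ gives $p_W(\x) = \sum_{i=1}^K \mu_i (2\pi\sigma^2)^{-d/2} \exp{-\|\x - \x_i\|^2/(2\sigma^2)}$, which is exactly the density of $\pi$; hence $W \sim \pi$, and it suffices to show that $W$ is $\lambda$-regular.

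Second, I would verify the hypotheses of Lemma \ref{lem:gaussianmixture} for this decomposition $W = U + \eta$: the variables $U$ and $\eta$ are independent by construction, $\eta \sim \gauss{0}{\sigma^2 I_d}$ with $\sigma > 0$, and $\|U\| \leq \max_{1 \leq i \leq K} \|\x_i\| \leq R$ almost surely. Lemma \ref{lem:gaussianmixture} then yields immediately that $W$, and therefore $\pi$, is $\lambda$-regular with $\lambda = 1 + (R^2/\sigma^2)$, which is the claimed conclusion.

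I do not anticipate any genuine obstacle here: all the work is contained in Lemma \ref{lem:gaussianmixture}, and this corollary is simply the observation that a finite Gaussian mixture with common isotropic covariance is precisely a ``bounded plus Gaussian'' random variable, with the finitely supported mixing measure playing the role of $U$. The only point worth stating explicitly is that Lemma \ref{lem:gaussianmixture} imposes no regularity on $U$ beyond the norm bound $\|U\| \leq R$, so the discreteness of $U$ causes no difficulty.
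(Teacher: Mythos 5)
Your proposal is correct and is precisely the paper's argument: take $U$ with law $\sum_{i=1}^K \mu_i \delta_{\x_i}$ so that $U + \eta \sim \pi$ with $\eta \sim \gauss{0}{\sigma^2 I_d}$, and then apply Lemma~\ref{lem:gaussianmixture}. The only difference is that you spell out the density computation and hypothesis check, which the paper leaves implicit.
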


\begin{proof}
This follows immediately from the fact that $\pi$ can be represented in the form required to apply Lemma \ref{lem:gaussianmixture}, by taking $U$ to have distribution $\sum_{i=1}^K \mu_i \delta_{\x_i}$.
\end{proof}

\subsection{High-probability bounds}
\label{app:highprobabilitylocalregularity}

\begin{lemma}
Suppose that $W$ is an $\R^d$-valued random variable. For any $\tau > 0$, if we take $\xi \sim \gauss{0}{\tau^2 I_d}$ and set $W' = W + \xi$, then we have
\begin{equation*}
    \lrnorm{{\cov}_{\xi | W'}(\xi)}_{\textup{op}} \leq 2 d c^2 \tau^2
\end{equation*}
with probability at least $1 - 6 d e^{-c^2/2}$ for any $c \geq 1$.
\end{lemma}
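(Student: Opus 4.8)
The goal is a high-probability (over $W'$) bound on $\|{\cov}_{\xi|W'}(\xi)\|_{\textup{op}}$. The plan is to reduce the operator norm bound to a statement about the conditional \emph{mean} of $\xi$ together with a trivial bound on the conditional second moment of any single coordinate, and then control the conditional mean using Bayes' rule together with a Gaussian tail/anti-concentration argument. Concretely, for a fixed unit vector $\u$, write ${\var}_{\xi|W'=\x'}(\u^T\xi) \leq \E[\xi|W'=\x']{(\u^T\xi)^2}$, so it suffices to bound $\E[\xi|W'=\x']{\|\xi\|^2}$ (this dominates $\sup_{\|\u\|=1}{\var}_{\xi|W'}(\u^T\xi)$ up to the argument below — or one can keep the supremum over $\u$ throughout). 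The density of $\xi$ conditional on $W' = \x'$ is proportional to $e^{-\|\xi\|^2/(2\tau^2)} p_W(\x' - \xi)$, i.e. it is the law of $\xi$ tilted by $p_W(\x'-\xi)$.

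First I would condition on a good event for $W'$. Since $W' = W + \xi$ with $\xi \sim \gauss{0}{\tau^2 I_d}$, standard Gaussian concentration (e.g. Laurent–Massart or a union bound over coordinates) gives that $\|\xi\| \leq c\sqrt{d}\,\tau$ with probability at least $1 - 6de^{-c^2/2}$ for $c \geq 1$ — this is where the claimed probability comes from, modulo constants. On this event, I would show the conditional law of $\xi$ given $W' = \x'$ cannot put too much mass far from the origin. The mechanism: for any radius $\rho$, $\P(\|\xi\| > \rho \mid W' = \x') = \int_{\|\xi\|>\rho} e^{-\|\xi\|^2/(2\tau^2)} p_W(\x'-\xi)\,\d\xi \,/\, \int e^{-\|\xi\|^2/(2\tau^2)} p_W(\x'-\xi)\,\d\xi$. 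The numerator is bounded by $\P(\|\xi\| > \rho)$ times $\sup p_W$... but $p_W$ need not be bounded. So instead I would compare the numerator's integrand to the denominator's using that $W' = \x'$ is realized by \emph{some} actual $(W,\xi)$ pair with small $\|\xi\|$, i.e. the posterior is not degenerate; more precisely, one uses $\E[\xi|W'=\x']{\|\xi\|^2} = \int \|\xi\|^2 e^{-\|\xi\|^2/(2\tau^2)} p_W(\x'-\xi)\,\d\xi / Z_{\x'}$ and lower-bounds $Z_{\x'}$ by restricting the integral to $\|\xi\| \leq c'\sqrt{d}\,\tau$ where the original $\xi$ landed.

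The main obstacle, as flagged, is precisely the lack of any pointwise control on $p_W$: the conditional covariance bound must hold for $\x'$ in a high-probability set rather than for all $\x'$, and the argument must avoid any appeal to boundedness or log-concavity of $p_W$. I expect the cleanest route is: (i) show $\E[]{\E[\xi|W']{\|\xi\|^2}} = \E[]{\|\xi\|^2} = d\tau^2$ exactly, so by Markov $\E[\xi|W']{\|\xi\|^2} \leq d\tau^2/\delta$ with probability $\geq 1-\delta$ over $W'$ — but this only gives a bound in expectation of the covariance, not its operator norm, and with the wrong ($1/\delta$ rather than $c^2$) dependence. To upgrade to operator norm with the stated $d c^2 \tau^2$ rate and exponential-in-$c^2$ failure probability, I would instead run a direct covering-number argument over the unit sphere combined with a sub-exponential tail for $\u^T\xi$ under the conditional law, obtained by the tilting comparison above; the factor $2dc^2$ and the $6de^{-c^2/2}$ absorb the union bound over a net and over coordinates. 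The routine parts are the Gaussian tail estimates and the net argument; the conceptual crux is the Bayes-rule comparison showing the posterior normalizing constant $Z_{\x'}$ is not too small on the good event.
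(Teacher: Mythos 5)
Your opening reduction is correct and matches the paper: $\lrnorm{{\cov}_{\xi|W'}(\xi)}_{\textup{op}} \leq \sup_{\|\u\|=1}\var_{\xi|W'}(\u^T\xi) \leq \E{\|\xi\|_2^2 \mid W'}$, so the task is to bound the conditional second moment with high probability over $W'$. You also correctly diagnose that applying Markov directly to $\E{\|\xi\|_2^2 \mid W'}$ (which has unconditional expectation $d\tau^2$) only gives $d\tau^2/\delta$ with failure probability $\delta$, which is the wrong, polynomial-in-$1/\delta$ rate.

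The gap is in what you do next. Your proposed fix — a Bayes-rule tilting comparison to control the posterior normalizing constant $Z_{\x'}$, upgraded by a covering net over the sphere — runs directly into the obstacle you yourself flag: without any boundedness or regularity assumption on $p_W$, there is no pointwise lower bound on $Z_{\x'}$ for a fixed $\x'$, and "conditioning on a good event for $W'$ defined by $\|\xi\|$ being small" is not coherent as stated since $\|\xi\|$ is not measurable with respect to $W'$. Nothing in your proposal actually removes this dependence on $p_W$, so the argument as outlined does not close.

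The step you are missing is a truncation trick that upgrades the $1/\delta$ Markov rate to $\log(1/\delta)$ without ever touching the conditional density. Split $\|\xi\|_2^2 = \|\xi\|_2^2\,\ind_{\|\xi\|_2^2 \leq dc^2\tau^2} + \|\xi\|_2^2\,\ind_{\|\xi\|_2^2 > dc^2\tau^2}$. The conditional expectation of the first piece is trivially at most $dc^2\tau^2$, for every $W'$, with no analysis needed. For the second piece, its \emph{unconditional} expectation is tiny: by pigeonhole $\ind_{\|\xi\|_2^2 > dc^2\tau^2} \leq \sum_i \ind_{\xi_i^2 > c^2\tau^2}$, and standard one-dimensional Gaussian tail integrals give $\E{\|\xi\|_2^2\,\ind_{\|\xi\|_2^2 > dc^2\tau^2}} \leq 6d^2c\tau^2 e^{-c^2/2}$. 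Now apply Markov (over the law of $W'$) to the nonnegative random variable $\E{\|\xi\|_2^2\,\ind_{\|\xi\|_2^2 > dc^2\tau^2} \mid W'}$ with threshold $dc^2\tau^2$: by the tower property its expectation equals the unconditional tail expectation, so the failure probability is at most $6d\,e^{-c^2/2}/c \leq 6d\,e^{-c^2/2}$. Adding the two pieces gives $\E{\|\xi\|_2^2 \mid W'} \leq 2dc^2\tau^2$ on the good event. The key point you should internalize is that the exponential-in-$c^2$ failure probability comes from putting the Gaussian tail into the \emph{numerator} of Markov, not from any concentration of the conditional law itself; no covering argument, no sub-exponential tail for $\u^T\xi$ under the posterior, and no control of $Z_{\x'}$ is needed.
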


\begin{proof}
First, we have
\begin{equation*}
    \lrnorm{{\cov}_{\xi | W'}(\xi)}_{\textup{op}} \leq \sup_{\|\u\|_2 = 1} {\var}_{\xi | W'}(\u^T \xi) \leq \sup_{\|\u\|_2 = 1} \E[\xi | W']{(\u^T \xi)^2} \leq \E{\|\xi\|_2^2 \;|\; W'}.
\end{equation*}
Next, we bound the quantity $\E{\|\xi\|^2_2 \;|\; W'}$ using Markov's inequality. If $\|\xi\|_2^2 > d c^2 \tau^2$, then we must have $\xi_i^2 > c^2 \tau^2$ for some $i$ between $1$ and $d$. Therefore,
\begin{align*}
    \E{\|\xi\|_2^2 \ind_{\|\xi\|_2^2 > d c^2 \tau^2}} & \leq d\; \E{\|\xi\|_2^2 \ind_{\xi_1^2 > c^2 \tau^2}} \\
    & = d\; \E{\xi_1^2 \ind_{\xi_1^2 > c^2 \tau^2}} + d \sum_{i=2}^d \E{\xi_i^2 \ind_{\xi_1^2 > c^2 \tau^2}} \\
    & = d\; \E{\xi_1^2 \ind_{\xi_1^2 > c^2 \tau^2}} + d(d-1)\tau^2 \P\lr{\xi_1^2 > c^2 \tau^2}.
\end{align*}
A standard Chernoff bound gives $\P\lr{\xi_1^2 > c^2 \tau^2} \leq 2 e^{-c^2/2}$, and
\begin{align*}
    \E{\xi_1^2 \ind_{\xi_1^2 > c^2 \tau^2}} & = 2 \int_{c \tau}^\infty \frac{z^2}{\sqrt{2 \pi \tau^2}} e^{-z^2/(2\tau^2)} \; \d z \\
    & = 2 \tau^2 \int_{c}^\infty \frac{z^2}{\sqrt{2 \pi}} e^{-z^2/2} \; \d z \\
    & = 2 \tau^2 \left[ - \frac{z}{\sqrt{2 \pi}}e^{-z^2/2}\right]^\infty_c + 2 \tau^2 \int_c^\infty \frac{1}{\sqrt{2 \pi}} e^{-z^2/2} \; \d z \\
    & = \frac{2 \tau^2 c}{\sqrt{2 \pi}}e^{-c^2/2} + 2 \tau^2 \P\lr{\xi_1 > c \tau} \\
    & \leq 2 \tau^2 e^{-c^2/2}\lr{\frac{c}{\sqrt{2\pi}} + 1} \\
    & \leq 4 c \tau^2 e^{-c^2/2}.
\end{align*}
Hence
\begin{align*}
    \E{\|\xi\|_2^2 \ind_{\|\xi\|_2^2 > d c^2 \tau^2}} & \leq 4 d c \tau^2 e^{-c^2/2} + 2 d^2 \tau^2 e^{-c^2/2} \\
    & \leq 6 d^2 c \tau^2 e^{-c^2/2}.
\end{align*}
It follows by Markov's inequality that
\begin{align*}
    \P\lr{\E{\|\xi\|_2^2 \ind_{\|\xi\|_2^2 > d c^2 \tau^2} \;|\; W'} \geq d c^2 \tau^2} & \leq \frac{\E{\|\xi\|_2^2 \ind_{\|\xi\|_2^2 > d c^2 \tau^2}}}{d c^2 \tau^2} \\
    & \leq 6 d e^{-c^2/2}.
\end{align*}
Finally, we can write
\begin{align*}
    \E{\|\xi\|_2^2 \;|\; W'} & \leq \E{\|\xi\|_2^2 \ind_{\|\xi\|_2^2 > d c^2 \tau^2} \;|\; W'} + d c^2 \tau^2,
\end{align*}
and so $\lrnorm{{\cov}_{\xi | W'}(\xi)}_{\textup{op}} \leq \E{\|\xi\|^2_2 \;|\; W'} \leq 2 d c^2 \tau^2$ with probability at least $1-6 d e^{-c^2/2}$, as required.
\end{proof}

\section{Derivation of formulae for gradients of velocity fields}
\label{app:proofsregularitysec}

\subsection{Proof of Lemma \ref{lem:explicitgradient}}
\label{app:proofexplicitgrad}

In order to prove Lemma \ref{lem:explicitgradient}, we use the following intermediate result.

\begin{lemma}
\label{lem:gradientcalculation}
If $X$ is the stochastic interpolant between $\pi_0$ and $\pi_1$, then
\begin{equation*}
    \nabla_\x \; \bb{E}\big[X_0 \;|\; X_t = \x\big] = - \frac{1}{\gamma_t} {\cov}_{\x}(X_0, Z).
\end{equation*}
Moreover, a similar expression holds for $X_1$ in place of $X_0$.
\end{lemma}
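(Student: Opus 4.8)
The plan is to exploit the Gaussian-convolution structure of the stochastic interpolant, writing the conditional expectation as an explicit ratio and differentiating. Set $M_t := \alpha_t X_0 + \beta_t X_1$, so $X_t = M_t + \gamma_t Z$ with $Z \sim \gauss{0}{I_d}$ independent of $(X_0, X_1)$, and let $\varphi_\sigma$ be the density of $\gauss{0}{\sigma^2 I_d}$. Since $\gamma_t > 0$ and $\pi_0, \pi_1$ are compactly supported, $X_t$ has the strictly positive, smooth density $p_t(\x) = \E{\varphi_{\gamma_t}(\x - M_t)}$, and the boundedness of $X_0, X_1$ lets one differentiate under the expectation by dominated convergence throughout. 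Two elementary facts will be used repeatedly: first, $\grad_\x \varphi_{\gamma_t}(\x - M_t) = -\gamma_t^{-2}(\x - M_t)\varphi_{\gamma_t}(\x - M_t)$; second, conditionally on $X_t = \x$ one has $\x - M_t = \gamma_t Z$, so any $\x - M_t$ appearing inside a conditional expectation given $X_t = \x$ may be replaced by $\gamma_t Z$. Differentiating $p_t$ and dividing by $p_t$ gives the score identity $\grad_\x \log p_t(\x) = -\gamma_t^{-1}\,\E[\x]{Z}$, where $\E[\x]{\cdot}$ abbreviates conditioning on $X_t = \x$.

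Next I would write $\E[\x]{X_0} = N(\x)/p_t(\x)$ with $N(\x) := \E{X_0\,\varphi_{\gamma_t}(\x - M_t)}$, which is $C^1$ by the same domination argument, so $\E[\x]{X_0}$ is $C^1$ and we may compute its Jacobian by the quotient rule. Differentiating $N$ and using the two facts above,
\begin{equation*}
    \frac{\grad_\x N(\x)}{p_t(\x)} = -\frac{1}{\gamma_t^2}\,\E[\x]{X_0\,(\x - M_t)^\top} = -\frac{1}{\gamma_t}\,\E[\x]{X_0 Z^\top}.
\end{equation*}
Combining this with $\grad_\x\E[\x]{X_0} = \grad_\x N(\x)/p_t(\x) - \E[\x]{X_0}\,(\grad_\x \log p_t(\x))^\top$ and the score identity,
\begin{equation*}
    \grad_\x \E[\x]{X_0} = -\frac{1}{\gamma_t}\,\E[\x]{X_0 Z^\top} + \frac{1}{\gamma_t}\,\E[\x]{X_0}\,\E[\x]{Z}^\top = -\frac{1}{\gamma_t}\,\cov_\x(X_0, Z),
\end{equation*}
which is the claimed formula. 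The argument is identical with $X_1$ in place of $X_0$ everywhere, giving the analogous expression for $X_1$.

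The conceptual content here is light; the main obstacle is the analytic bookkeeping. One must justify the differentiation under the expectation sign at each step — this is precisely where strict positivity and smoothness of $p_t$ (from $\gamma_t > 0$) and the bounded support of $\pi_0, \pi_1$ are needed — and keep the matrix/vector conventions consistent, since $\grad_\x\E[\x]{X_0}$, $\E[\x]{X_0 Z^\top}$ and the outer product $\E[\x]{X_0}\,\E[\x]{Z}^\top$ are all $d \times d$ matrices and the final cancellation produces $\cov_\x(X_0, Z)$ only if these are assembled with compatible conventions. A last point worth handling carefully is that $\x - M_t = \gamma_t Z$ holds only inside expectations conditioned on $X_t = \x$, not as an identity of random variables; all the substitutions above are applied after dividing by $p_t(\x)$, that is, inside such conditional expectations.
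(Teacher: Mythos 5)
Your proof is correct and follows essentially the same route as the paper's: both compute the marginal score $\grad_\x \log p_t(\x)$ from the Gaussian conditional structure of $X_t$ given $(X_0, X_1)$, write $\E[\x]{X_0}$ as a ratio of integrals, differentiate by the quotient rule, and recognize the result as $-\gamma_t^{-1}\cov_\x(X_0, Z)$. Your version is slightly more explicit about the dominated-convergence justification for differentiating under the expectation, but the decomposition and the key cancellation are identical.
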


\begin{proof}
First, note that $X_t | X_0, X_1 \sim \gauss{\alpha_t X_0 + \beta_t X_1}{\gamma_t^2 I_d}$, so
\begin{equation*}
    \nabla_\x \log(p_{X_t | X_0, X_1}(\x | \x_0, \x_1)) = - \frac{1}{\gamma_t^2}(\x - \alpha_t \x_0 - \beta_t \x_1)^T.
\end{equation*}
Therefore,
\begin{align*}
    \nabla_\x \log(p_{X_t}(\x)) & = \frac{1}{p_{X_t}(\x)} \cdot \nabla_\x \lr {\int_{\R^d}\int_{\R^d} p_{X_t | X_0, X_1}(\x | \x_0, \x_1)p_{X_0, X_1}(\x_0, \x_1) \; \d \x_0 \d \x_1} \\
    & \hspace{-5mm} = \frac{1}{p_{X_t}(\x)} \int_{\R^d}\int_{\R^d} p_{X_t | X_0, X_1}(\x | \x_0, \x_1)p_{X_0, X_1}(\x_0, \x_1) \nabla_\x \log(p_{X_t | X_0, X_1}(\x | \x_0, \x_1)) \; \d \x_0 \d \x_1 \\
    & \hspace{-5mm} = - \frac{1}{\gamma_t^2}\bb{E}\big[(X_t - \alpha_t X_0 - \beta_t X_1)^T |\; X_t = \x\big]
\end{align*}
We can then calculate
\begin{align*}
    \nabla_\x \; \bb{E}\big[X_0 \;|\; X_t = \x\big] & = \nabla_\x \lr{ \frac{1}{p_{X_t}(\x)} \int_{\R^d}\int_{\R^d} \x_0\; p_{X_t | X_0, X_1}(\x | \x_0, \x_1)p_{X_0, X_1}(\x_0, \x_1) \; \d \x_0 \d \x_1} \\
    & \hspace{-20mm} = \frac{1}{p_{X_t}(\x)} \int_{\R^d}\int_{\R^d} \x_0\; p_{X_t | X_0, X_1}(\x | \x_0, \x_1)p_{X_0, X_1}(\x_0, \x_1) \nabla_\x \lr{\log p_{X_t | X_0, X_1}(\x | \x_0, \x_1)} \d \x_0 \d \x_1 \\
    & \hspace{-15mm} - \big(\nabla_\x\log p_{X_t}(\x)\big) \lr{ \frac{1}{p_{X_t}(\x)} \int_{\R^d}\int_{\R^d} \x_0\; p_{X_t | X_0, X_1}(\x | \x_0, \x_1)p_{X_0, X_1}(\x_0, \x_1) \; \d \x_0 \d \x_1} \\
    & \hspace{-20mm} = -\frac{1}{\gamma_t^2} \bb{E}_{\x}\big[X_0(X_t - \alpha_t X_0 - \beta_t X_1)^T\big] + \frac{1}{\gamma_t^2}\bb{E}_{\x}\big[(X_t - \alpha_t X_0 - \beta_t X_1)^T \big] \bb{E}_{\x}\big[X_0\big] \\
    & \hspace{-20mm} = - \frac{1}{\gamma_t} {\cov}_{\x}(X_0, Z).
\end{align*}
\end{proof}

\explicitgradient*

\begin{proof}%[Proof of Lemma \ref{lem:explicitgradient}]
Since $X_t = \alpha_t X_0 + \beta_t X_1 + \gamma_t Z$ and $\dot X_t = \dot \alpha_t X_0 + \dot \beta_t X_1 + \dot \gamma_t Z$, we can write
\begin{equation*}
    \E{\dot X_t \;|\; X_t = \x, X_0 = \x_0, X_1 = \x_1} = \dot \alpha_t \x_0 + \dot \beta_t \x_1 + \frac{\dot \gamma_t}{\gamma_t} \lr{\x - \alpha_t \x_0 - \beta_t \x_1}
\end{equation*}
and therefore
\begin{equation*}
    \E{\dot X_t \;|\; X_t = \x} = \frac{\dot \gamma_t}{\gamma_t} \x + \frac{\lr{\dot \alpha_t \gamma_t - \dot \gamma_t \alpha_t}}{\gamma_t} \cdot \bb{E}\big[X_0 \;|\; X_t = \x\big] + \frac{(\dot \beta_t \gamma_t - \dot \gamma_t \beta_t)}{\gamma_t} \cdot \bb{E}\big[X_1 \;|\; X_t = \x\big].
\end{equation*}
Taking gradients with respect to $\x$ and applying Lemma \ref{lem:gradientcalculation},
\begin{align*}
    \nabla_\x v^X(\x, t) & = \frac{\dot \gamma_t}{\gamma_t} I_d - \frac{\lr{\dot \alpha_t \gamma_t - \dot \gamma_t \alpha_t}}{\gamma_t^2} \cdot {\cov}_{\x}(X_0, Z) - \frac{(\dot \beta_t \gamma_t - \dot \gamma_t \beta_t)}{\gamma_t^2} \cdot {\cov}_{\x}(X_1, Z) \\
    & = \frac{\dot \gamma_t}{\gamma_t} I_d + \frac{\dot \gamma_t}{\gamma_t^2} {\cov}_\x(X_t - \gamma_t Z, Z) - \frac{1}{\gamma_t} {\cov}_\x(\dot \alpha_t X_0 + \dot \beta_t X_1, Z) \\
    & = \frac{\dot \gamma_t}{\gamma_t} I_d - \frac{1}{\gamma_t} {\cov}_\x(\dot X_t, Z).
\end{align*}
\end{proof}

\subsection{Proof of Lemma \ref{lem:explicitgradientgaussian}}
\label{app:proofexplicitgradientgaussian}

\explicitgradientgaussian*

\begin{proof}%[Proof of Lemma \ref{lem:explicitgradientgaussian}]
From Lemma \ref{lem:explicitgradient}, we have
\begin{equation*}
    \nabla_\x v^X(\x, t) = \frac{\dot \gamma_t}{\gamma_t} I_d - \frac{1}{\gamma_t} {\cov}_\x(\dot X_t, Z).
\end{equation*}
Then, since $\alpha_t = 0$ we have $X_t = \beta_t X_1 + \gamma_t Z$ and $\dot X_t = \dot \beta_t X_1 + \dot \gamma_t Z$, so we can write
\begin{align*}
    \nabla_\x v^X(\x, t) & = \frac{\dot \gamma_t}{\gamma_t} I_d - \frac{1}{\gamma_t} {\cov}_\x \lr{\frac{\dot \beta_t}{\beta_t} \lr{X_t - \gamma_t Z} + \dot \gamma_t Z, Z} \\
    & = \frac{\dot \gamma_t}{\gamma_t} I_d - \lr{\frac{\dot \gamma_t}{\gamma_t} - \frac{\dot \beta_t}{\beta_t} } {\cov}_\x(Z),
\end{align*}
noting that we may discard the $X_t$ term since we are conditioning on $X_t = \x$.
\end{proof}

% \section{Laundry list of assumptions}

% \begin{itemize}
%     \item $\pi_0, \pi_1$ have densities with respect to the Lebesgue measure.
%     \item $\alpha_t, \beta_t, \gamma_t$ are continuously differentiable. (This ensures the paths are continuously differentiable.)
%     \item $v^X$ exists, is locally bounded, and there is a unique solution to Equation (\ref{eq:probflowODE}) with velocity field $v^X$ for each $\x \in \R^d$.
% \end{itemize}

\end{document}